\newtheorem{theorem}{Theorem}
\newtheorem{lemma}{Lemma}
\newenvironment{proof}{\medskip\noindent{\bf Proof.}}{\hfill$\Box$\vspace*{1mm}\medskip}
\newcommand{\mI}{\mathrm{I}}
\renewcommand{\phi}{\varphi}
\renewcommand{\P}{\mathbb{P}}
\newcommand{\E}{\mathbb{E}}
\newcommand{\R}{\mathbb{R}}
\newcommand{\cN}{\mathcal{N}}
\def\ds1{\mathds{1}}
\renewcommand{\epsilon}{\varepsilon}
\renewcommand{\tilde}{\widetilde}
\newlength{\minipagewidth}
\newcommand{\beq}{\begin{equation}}
\newcommand{\eeq}{\end{equation}}
\newcommand{\beqa}{\begin{eqnarray}}
\newcommand{\eeqa}{\end{eqnarray}}
\newcommand{\beqan}{\begin{eqnarray*}}
\newcommand{\eeqan}{\end{eqnarray*}}
\def\ba#1\ea{\begin{align*}#1\end{align*}} 
\def\banum#1\eanum{\begin{align}#1\end{align}} 
\newcommand{\mS}{\mathbb{S}}
\begin{document}
\title{A single gradient step finds adversarial examples on random two-layers neural networks}
\author{S\'ebastien Bubeck \\
Microsoft Research
\and Yeshwanth Cherapanamjeri \\
UC Berkeley
\and  Gauthier Gidel\thanks{Canada CIFAR AI Chair} \\
Mila, Université de Montréal \\
\and R\'emi Tachet des Combes \\
Microsoft Research}
\date{}

\maketitle

\begin{abstract}
Daniely and Schacham recently showed that gradient descent finds adversarial examples on random undercomplete two-layers ReLU neural networks. The term ``undercomplete'' refers to the fact that their proof only holds when the number of neurons is a vanishing fraction of the ambient dimension.
We extend their result to the overcomplete case, where the number of neurons is larger than the dimension (yet also subexponential in the dimension). In fact we prove that a single step of gradient descent suffices. We also show this result for any subexponential width random neural network with smooth activation function.
\end{abstract}

\section{Introduction} \label{sec:intro}
We study the following random two-layers neural network model: let $f : \R^d \rightarrow \R$ be a random function defined by
\begin{equation} \label{eq:nnform}
f(x) = \frac1{\sqrt{k}} \sum_{\ell=1}^k a_\ell \psi(w_\ell \cdot x) \,,
\end{equation}
where $\psi : \R \rightarrow \R$ is a fixed non-linearity, the weight vectors $w_{\ell} \in \R^d$ are i.i.d.\ from a Gaussian distribution $\cN\left(0, \frac{1}{d} \mI_d\right)$ (so that they are roughly unit norm vectors), and the coefficients $a_{\ell} \in \R$ are independent from the weight vectors and i.i.d.\ uniformly distributed in $\left\{ - 1 , +1 \right\}$. With this parametrization, the central limit theorem says that, for $x \in \sqrt{d} \cdot \mS^{d-1}$ (so that $w_{\ell} \cdot x \sim \cN(0,1)$) and large width $k$, the distribution of $f(x)$ is approximately a centered Gaussian with variance $\E_{X \sim \cN(0,1)}[\psi(X)^2]$.
\newline

Our goal is to study the concept of {\em adversarial examples} in this random model. We say that $\delta \in \R^d$ is an {\em adversarial perturbation} at $x \in \R^d$ if $\|\delta\| \ll \|x\|$ and $\mathrm{sign}(f(x)) \neq \mathrm{sign}(f(x+\delta))$, and in this case we call $x+\delta$ an {\em adversarial example}. Our main result is that, while $|f(x)| \approx 1$ with high probability, a {\em single} gradient step on $f$ (i.e., a perturbation of the form $\delta = \eta \nabla f(x)$ for some $\eta \in \R$) suffices to find such adversarial examples, with roughly $\|\delta\| \simeq \frac{\|x\|}{\sqrt{d}}= 1$. We prove this statement for a network with smooth non-linearity and subexponential width (e.g., $k \ll \exp(o(d))$), as well as for the Rectified Linear Unit (ReLU) $\psi(t) = \max(0,t)$ in the overcomplete and subexponential regime (e.g. $d \ll k \ll \exp(d^{c})$ for some constant $c>0$).

\begin{theorem} \label{thm:smooth}
Let $\gamma \in (0,1)$ and $\psi$ be non-constant, Lipschitz and with Lipschtiz derivative. There exists constants $C_1, C_2, C_3, C_4$ depending on $\psi$ such that the following holds true. Assume $k \geq C_1 \log^3(1/\gamma)$ and $d \geq C_2 \log(k/\gamma) \log(1/\gamma)$, and let $\eta \in \R$ such that $|\eta| = C_3 \sqrt{\log(1/\gamma)}$ and $\mathrm{sign}(\eta) = - \mathrm{sign}(f(x))$. Then with probability at least $1-\gamma$ one has:
\[
\mathrm{sign}(f(x)) \neq \mathrm{sign}(f(x+\eta \nabla f(x))) \,.
\]
Moreover we have $\|\eta \nabla f(x)\| \leq C_4 \sqrt{\log(1/\gamma)}$.
\end{theorem}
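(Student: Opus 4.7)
The plan is to expand $f(x+\eta \nabla f(x))$ to first order in the direction $\eta \nabla f(x)$,
\[
f(x+\eta \nabla f(x)) = f(x) + \eta \|\nabla f(x)\|^2 + R,
\]
and arrange that the linear term dominates both $f(x)$ and the remainder $R$ in magnitude. Since $\mathrm{sign}(\eta)=-\mathrm{sign}(f(x))$, it suffices to establish $|\eta|\,\|\nabla f(x)\|^2 > |f(x)|+|R|$ on an event of probability at least $1-\gamma$, which I would build by a union bound over three concentration statements.

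First, $f(x)=\frac{1}{\sqrt{k}}\sum_\ell a_\ell\psi(w_\ell\cdot x)$ is conditionally (on the weights) a bounded Rademacher sum in the $a_\ell$'s, so Hoeffding yields $|f(x)|=O(\sqrt{\log(1/\gamma)})$ with a constant depending on $\psi$. Second, I would show $\|\nabla f(x)\|^2$ concentrates around $c_\psi:=\E_{G\sim\cN(0,1)}[\psi'(G)^2]$, which is strictly positive because $\psi$ is non-constant. Splitting
\[
\|\nabla f(x)\|^2 = \frac{1}{k}\sum_\ell \psi'(w_\ell\cdot x)^2\|w_\ell\|^2 + \frac{1}{k}\sum_{\ell\neq\ell'}a_\ell a_{\ell'}\,\psi'(w_\ell\cdot x)\psi'(w_{\ell'}\cdot x)\,(w_\ell\cdot w_{\ell'}),
\]
the diagonal part concentrates around $c_\psi$ at rate $k^{-1/2}$ by Bernstein (this is ultimately where the hypothesis $k\geq C_1\log^3(1/\gamma)$ enters), while the off-diagonal part is a Rademacher chaos of order two in the $a_\ell$'s with conditional variance $O(1/d)$, rendered negligible by the hypothesis on $d$. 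As a bonus this estimate also yields $\|\eta\nabla f(x)\|\leq C_4\sqrt{\log(1/\gamma)}$.

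The hard part is controlling $R=\frac{1}{\sqrt{k}}\sum_\ell a_\ell\rho_\ell$, where $\rho_\ell=\psi(w_\ell\cdot x+w_\ell\cdot\delta)-\psi(w_\ell\cdot x)-\psi'(w_\ell\cdot x)\,w_\ell\cdot\delta$ and $\delta=\eta\nabla f(x)$. The Lipschitz-derivative bound $|\rho_\ell|\leq\frac{L}{2}(w_\ell\cdot\delta)^2$ gives only $|R|\leq\frac{L\eta^2}{2\sqrt{k}}\|W\nabla f(x)\|^2$ where $W$ has rows $w_\ell^{\top}$; since $\|WW^{\top}\|_{\mathrm{op}}\asymp k/d$ in the overcomplete regime (Marchenko--Pastur), this crude estimate delivers only $|R|=O(\sqrt{k}\,\eta^2/d)$ and forces $d\gg\sqrt{k\log(1/\gamma)}$, too restrictive for the subexponential widths permitted by the theorem. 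To close this gap I would use the exact second-order Taylor form so that $R$ reduces (up to an error absorbed by the Lipschitz-derivative hypothesis on $\psi''$) to the Hessian quadratic form $\frac{\eta^2}{2\sqrt{k}}\sum_\ell a_\ell\,\psi''(w_\ell\cdot x)\,(w_\ell\cdot\nabla f(x))^2$ and exploit the Rademacher cancellation of the outer $a_\ell$. Concretely, I would (i) bound $\max_\ell|w_\ell\cdot\nabla f(x)|=O(\sqrt{\log(k/\gamma)/d})$ by a conditional Hoeffding plus a union bound over $\ell$; (ii) decouple the $a_\ell$ appearing inside $(w_\ell\cdot\nabla f(x))^2$ from the outer $a_\ell$ via the leave-one-out decomposition $\nabla f(x)=\nabla f^{(-\ell)}(x)+\frac{a_\ell}{\sqrt{k}}\psi'(w_\ell\cdot x)w_\ell$; and (iii) apply a Bernstein inequality (conditional on the weights and on all $a_{\ell'}$, $\ell'\neq\ell$) to the resulting Rademacher sum whose coefficients are independent of $a_\ell$. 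This produces $|R|=O(\eta^2\sqrt{\log(1/\gamma)\log(k/\gamma)}/d)$, which is $o(\sqrt{\log(1/\gamma)})$ under $d\geq C_2\log(k/\gamma)\log(1/\gamma)$. Choosing $|\eta|=C_3\sqrt{\log(1/\gamma)}$ with $C_3$ sufficiently large then makes $|\eta|\|\nabla f(x)\|^2$ exceed $|f(x)|+|R|$, and a union bound over the preceding high-probability events completes the proof.
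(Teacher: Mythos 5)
Your skeleton (linear term $\eta\|\nabla f(x)\|^2$ dominates $|f(x)|+|R|$) matches the paper's, and your diagnosis of why the naive triangle-inequality bound $|R|\lesssim L\eta^2\sqrt{k}/d$ fails in the overcomplete regime is exactly right. The value and gradient-norm estimates are fine (modulo the fact that $\psi(w_\ell\cdot x)$ is not bounded, so your ``Hoeffding'' is really a conditional subgaussian bound with variance proxy $\frac{1}{k}\sum_\ell\psi(w_\ell\cdot x)^2$, which then needs its own concentration step -- the paper uses Bernstein directly). The problem is step (iii) of your remainder analysis.

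After the leave-one-out decomposition you are left with a sum $\frac{\eta^2}{2\sqrt{k}}\sum_\ell a_\ell\,\psi''(w_\ell\cdot x)\,(w_\ell\cdot g^{(-\ell)})^2$ in which the coefficient of $a_\ell$ no longer depends on $a_\ell$, but it \emph{does} depend on all the other signs $(a_m)_{m\neq\ell}$. There is no single conditioning under which all $k$ terms are simultaneously independent: conditioning ``on the weights and on all $a_{\ell'}$, $\ell'\neq\ell$'' is a different sigma-algebra for each $\ell$, so you cannot apply Bernstein (or Hoeffding) once to the whole sum, and the sum is not a martingale in any ordering of the indices since each coefficient depends on future signs. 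What you actually have is a Rademacher chaos of order three in the $a$'s (expand $(w_\ell\cdot\nabla f(x))^2$), and controlling it requires a genuine decoupling or hypercontractivity argument that your proposal does not supply. This is the one real gap; everything downstream (the claimed bound $|R|=O(\eta^2\sqrt{\log(1/\gamma)\log(k/\gamma)}/d)$) rests on it.

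Two standard repairs exist. One is to carry out the order-three chaos concentration properly (de la Pe\~na--Montgomery-Smith decoupling or moment comparison). The other -- and this is what the paper does -- is to sidestep the dependence of the step direction on the randomness altogether by proving a bound that is \emph{uniform} over all deterministic perturbations $\delta$ with $\|\delta\|\leq R=O(|\eta|)$: the paper controls $\sup_{\|v\|=1,\|\delta\|\leq R}\frac{1}{\sqrt{k}}\sum_\ell a_\ell(w_\ell\cdot v)\bigl(\psi'(w_\ell\cdot x)-\psi'(w_\ell\cdot(x+\delta))\bigr)$ by Bernstein at each point of an $\epsilon$-net of size $(10R/\epsilon)^{2d}$ plus a crude Lipschitz bound on the net error, then feeds the resulting uniform gradient-increment bound into an exact integral identity (Lemma \ref{lem:gradientdescent}) rather than a Taylor expansion. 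This costs the $\sqrt{d}$ union-bound factor that produces the condition $d\gtrsim\log(k/\gamma)\log(1/\gamma)$, but it never needs $\psi''$ and it makes the randomness of $\eta\nabla f(x)$ irrelevant. Your argument would be complete if you replaced (ii)--(iii) by such a uniform-over-the-ball bound on the remainder, or if you fleshed out the chaos decoupling.
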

Note that our proof of Theorem \ref{thm:smooth} in Section \ref{sec:smooth} easily gives explicit values for $C_1, C_2, C_3, C_4$. Also note that the subexponential width condition in the above Theorem is of the form $k \ll \exp(o(d))$.

\begin{theorem} \label{thm:relu}
Let $\gamma \in (0,1)$ and $\psi(t) = \max(0,t)$. There exists constants $C_1, C_2, C_3, C_4$ such that the following holds true. Assume $k \geq C_1 d \log^2(d)$ and $\frac{d}{\log(d)} \geq C_2 \log^4(k) \log(1/\gamma)$, and let $\eta \in \R$ such that $|\eta| = C_3 \sqrt{\log(1/\gamma)}$ and $\mathrm{sign}(\eta) = - \mathrm{sign}(f(x))$. Then with probability at least $1-\gamma$ one has:
\[
\mathrm{sign}(f(x)) \neq \mathrm{sign}(f(x+\eta \nabla f(x))) \,.
\]
Moreover we have $\|\eta \nabla f(x)\| \leq C_4 \sqrt{\log(1/\gamma)}$.
\end{theorem}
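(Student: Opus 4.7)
The plan is to adapt the Taylor-expansion strategy of Theorem~\ref{thm:smooth}, replacing the (non-existent) second-order remainder by an \emph{exact} decomposition that isolates the contribution of ``flipping'' neurons, i.e.\ those whose preactivation sign changes under the perturbation. Concretely, for any $\delta \in \R^d$ one has
\[
f(x + \delta) = f(x) + \delta \cdot \nabla f(x) + R(\delta), \qquad R(\delta) := \frac{1}{\sqrt{k}} \sum_\ell a_\ell E_\ell(\delta),
\]
with $E_\ell(\delta) := \psi(w_\ell \cdot x + w_\ell \cdot \delta) - \psi(w_\ell \cdot x) - \ds1\{w_\ell \cdot x > 0\}(w_\ell \cdot \delta)$. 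A case analysis on the signs of $w_\ell \cdot x$ and $w_\ell \cdot (x+\delta)$ gives $E_\ell(\delta) = 0$ unless neuron $\ell$ flips, which forces $|w_\ell \cdot x| \leq |w_\ell \cdot \delta|$; moreover $|E_\ell(\delta)| \leq |w_\ell \cdot \delta|$.

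I would first establish concentration for the two main ``deterministic'' quantities. A Hoeffding bound on the subgaussian terms $a_\ell \psi(w_\ell \cdot x)$ (conditional on the weights) gives $|f(x)| \leq C \sqrt{\log(1/\gamma)}$ with probability $\geq 1 - \gamma/4$. For the gradient, splitting $\|\nabla f(x)\|^2$ into a diagonal piece $\frac{1}{k} \sum_\ell \ds1\{w_\ell \cdot x > 0\} \|w_\ell\|^2$ (concentrating around $1/2$ by a $\chi^2$-type estimate) and a Rademacher bilinear form in $\{a_\ell\}$ (controlled by Hanson--Wright) yields $\|\nabla f(x)\|^2 \in [1/4, 1]$ with probability $\geq 1 - \gamma/4$. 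This is precisely where the overcomplete hypothesis $k \geq C_1 d \log^2 d$ enters, as the empirical covariance of the active neurons must concentrate near $\tfrac12 \mI_d$.

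The core of the argument, and the main obstacle, is bounding $R(\eta \nabla f(x))$. If $\delta$ were \emph{deterministic} of norm $\rho = O(\sqrt{\log(1/\gamma)})$, a Hoeffding bound over $\{a_\ell\}$ conditional on the weights would make $R(\delta)$ subgaussian with variance proxy $\frac{1}{k} \sum_\ell (w_\ell \cdot \delta)^2 \ds1\{|w_\ell \cdot x| \leq |w_\ell \cdot \delta|\}$, whose expectation is of order $\rho^3/d^{3/2}$ (a short Gaussian computation on the pair $(w_\ell \cdot x, w_\ell \cdot \delta)$); hence $|R(\delta)| = \tilde{O}(\rho^{3/2}/d^{3/4}) = o(\sqrt{\log(1/\gamma)})$. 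The difficulty is that $\delta = \eta \nabla f(x)$ depends on the $a_\ell$, coupling the flipping set with the signs. My plan is a twofold decoupling: (i) a leave-one-out decomposition $\nabla f(x) = g_\ell + h_\ell$ with $g_\ell$ independent of $a_\ell$ and $\|h_\ell\| = O(1/\sqrt{k})$, which shows that the ``$\ell$ flips'' event is, up to a perturbation of size $\eta/\sqrt{k}$, measurable with respect to $(w, \{a_{\ell'}\}_{\ell' \neq \ell})$; and (ii) a union bound over an $\epsilon$-net of $\delta$'s on a ball of radius $O(\sqrt{\log(1/\gamma)})$, using the crude Lipschitz estimate $|R(\delta_1) - R(\delta_2)| \leq \frac{2}{\sqrt{k}} \sum_\ell |w_\ell \cdot (\delta_1 - \delta_2)|$ to bound the discretization error. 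The precise width constraint $d/\log d \geq C_2 \log^4(k) \log(1/\gamma)$ should emerge from this net/concentration calculation.

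Combining these ingredients yields $f(x + \eta \nabla f(x)) = f(x) + \eta \|\nabla f(x)\|^2 + R(\eta \nabla f(x))$, where the signal $\eta \|\nabla f(x)\|^2$ has magnitude $\geq C_3 \sqrt{\log(1/\gamma)}/4$ with sign opposite to $f(x)$. Taking $C_3$ large enough that this dominates both $|f(x)|$ and $|R|$ produces the sign flip, and the norm bound $\|\eta \nabla f(x)\| \leq C_4 \sqrt{\log(1/\gamma)}$ is immediate from the upper estimate on $\|\nabla f(x)\|$.
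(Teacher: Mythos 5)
Your proposal is essentially correct, but it takes a genuinely different route from the paper at the key technical step. The paper reduces the theorem to a \emph{uniform} bound on the gradient deviation $\sup_{\|\delta\|\le R}\|\nabla f(x)-\nabla f(x+\delta)\|$ (via Lemma \ref{lem:gradientdescent}) and then must fight the fact that $\psi'$ is an indicator, hence discontinuous in $\delta$: the $\epsilon$-net argument breaks at the microscopic scale, and the paper patches it by showing that at most $O(d)$ neurons can flip inside an $\epsilon$-ball together with a uniform bound on $\|\frac{1}{\sqrt k}\sum_{i\in S}a_iw_i\|$ over all small subsets $S$. You instead control the first-order Taylor remainder $R(\delta)=\frac{1}{\sqrt k}\sum_\ell a_\ell E_\ell(\delta)$ of the \emph{function value}. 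Since $\psi$ itself is $1$-Lipschitz, your crude estimate $|R(\delta_1)-R(\delta_2)|\le\frac{2}{\sqrt k}\sum_\ell|w_\ell\cdot(\delta_1-\delta_2)|=O(\sqrt k\,\|\delta_1-\delta_2\|)$ is valid, so $\epsilon=1/k$ makes the discretization error $O(1/\sqrt k)$ and the entire microscopic-scale machinery disappears. This is a real simplification; the price is that your argument is specific to a single gradient step from $x$ (the paper's uniform gradient control is the stronger statement \eqref{eq:gradsmooth} about the landscape). Your fixed-$\delta$ moment computation matches the paper's Lemma \ref{lem:hessrelu1} in spirit, and the case analysis for $E_\ell$ is correct.

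Two caveats. First, your ``twofold decoupling'' is half redundant: once you bound $\sup_{\|\delta\|\le \rho}|R(\delta)|$ over a deterministic ball containing $\eta\nabla f(x)$ with high probability, the dependence of $\delta$ on the $a_\ell$ is already resolved, and the leave-one-out step (i) buys you nothing. Second, to survive the union bound over the net of size $(10Rk)^d$ you need the conditional variance proxy $\frac1k\sum_\ell(w_\ell\cdot\delta)^2\,\ds1\{|w_\ell\cdot x|\le|w_\ell\cdot\delta|\}$ to concentrate at failure probability $e^{-\Omega(d\log(Rk))}$, not merely to have expectation $\rho^3/d^{3/2}$; at that probability level the Bernstein deviation term dominates the mean and produces contributions of order $\rho\,(d/k)^{1/4}\mathrm{polylog}(k)$. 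This — not the concentration of $\|\nabla f(x)\|^2$, which holds for all $k$ by the paper's projection trick or by your Hanson--Wright route — is where the overcompleteness hypothesis $k\ge C_1 d\log^2(d)$ is actually consumed, exactly as in the term $40\sqrt{d/k}\log(Rk)$ of Lemma \ref{lem:hessrelu2}. With that bookkeeping done, your combination step (sign flip from $\eta\|\nabla f(x)\|^2$ dominating $|f(x)|+|R|$) goes through.
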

Note that the subexponential condition on the width in the above Theorem is of the form $k \ll \exp(d^{0.24})$. In fact by modifying a bit the proof we can get a condition of the form $k \ll \exp(d^{\rho})$ for any $\rho<1/2$, but for the sake of clarity we only prove the weaker version stated above.

\subsection{Related works}
The existence of adversarial examples in neural network architectures was first evidenced in the seminal paper of \cite{G14}, where the authors found adversarial examples by using the L-BFGS optimization procedure. Shortly after this work, it was hypothesized in \cite{Goodfellow15} that the existence of adversarial examples stems from an excessive ``linearity" of neural network models. This hypothesis was experimentally confirmed by showing that a {\em single step} of gradient descent suffices to find adversarial perturbations (the so-called {\em fast gradient sign method} -FGSM-). Our theorems can be thought of as a theoretical confirmation of the hypothesis in \cite{Goodfellow15}. In fact, as we explain in Section \ref{sec:landscape} below, our proofs proceed exactly by showing that ``most" two-layers neural networks behave ``mostly" linearly over ``vast" regions of input space.
\newline

We note that not {\em all} networks are susceptible to one-step gradient attacks to find adversarial examples. Indeed, in \cite{Goodfellow15}, it was shown that adversarial training can be used to build networks that are somewhat robust to one-step gradient attacks. Interestingly in \cite{Madry18} it was then shown that such models remain in fact susceptible to {\em multi-steps} gradient attacks, and empirically they demonstrated that better robustness can be achieved with adversarial training using multi-steps gradient attacks. Understanding this phenomenon theoretically remains a challenge, see for example \cite{allen2020feature} for one proposed approach, and \cite{8953595,NEURIPS2019_0defd533} for discussion/algorithmic consequences of the relation with the phenomenon of {\em gradient obfuscation} (see \cite{papernot2017practical, athalye2018obfuscated}).
\newline

Our work is a direct follow-up of \cite{DanielySchacham} (which itself is a follow-up on \cite{ShamirFather}). Daniely and Schacham prove that multi-steps gradient descent finds adversarial examples for ReLU random networks of the form \eqref{eq:nnform}, as long as the number of neurons is much smaller than the dimension (i.e., $k = o(d)$). They explicitly conjecture that this condition is not necessary, and indeed we exponentially improve their condition to requiring $k = \exp(o(d))$) in Theorem \ref{thm:relu} (see below for a discussion of the case where $k$ is exponential in the dimension). We note that there remains a small window of widths around $k \simeq d$ where the conjecture of Daniely and Schacham remains open, as we require $k \geq d \log^2(d)$ in Theorem \ref{thm:relu}. Moreover Daniely and Schacham went beyond two-layers neural networks, and they conjecture (and prove for shrinking layers) that gradient descent finds adversarial examples on random multi-layers neural networks. We give some experimental confirmation of this multi-layer conjecture in Section \ref{sec:exp}.
\newline

 The ultra-wide case $k = \exp(\Omega(d))$ remains open. This exponential size case seems of a different nature than the polynomial size we tackle here, at least for the ReLU activation function. In particular it is likely that the behavior with exponential width would be closely tied to the actual limit case $k=+\infty$, where the random model \eqref{eq:nnform} yields a Gaussian process. Namely for $k=+\infty$ one has that $f$ is a Gaussian process indexed by the sphere (say if we restrict to inputs $x \in \sqrt{d} \cdot \mS^{d-1}$), with $f(x) \sim \cN(0, \E_{X \sim \cN(0,1)}[\psi(X)])$ and $\E[f(x) f(y)] = \E_{X, Y \sim \cN(0,1) : \E[X Y] = x \cdot y} [\psi(X) \psi(Y)]$. For example if the activation function is a Hermite polynomial of degree $p$, then $f$ would be a spherical $p$-spin glass model. This polynomial case is particularly well-understood, and in fact the landscape we describe below in Section \ref{sec:landscape} was already described in this case in \cite{arous2020geometry} (see in particular Corollary 59). It would be interesting to see if the $p$-spin glass landscape literature can be extended to non-polynomial activation functions, and to a finite (but possibly exponential in $d$) $k$. A step in this latter direction was recently taken in \cite{EMS21}, where convergence rates to the Gaussian process limit where given both for polynomial activations and for the ReLU. Finally we note that for a smooth activation it might be that there is a more direct argument to remove the subexponential width condition in Theorem \ref{thm:smooth} (technically in the proof of Lemma \ref{lem:hesssmooth2} there might a better argument than using the naive upper bound on $\mathrm{Lip}(\Phi)$).

Finally, we note that, in practice, it has been found that there exists ``universal" adversarial perturbations that generalize across both inputs and neural networks, \cite{moosavi2017universal}. For the case of ReLU activation (Theorem \ref{thm:relu}), we could in fact prove our result by replacing the gradient step with a step in the direction $\sum_{\ell=1}^k a_{\ell} w_{\ell}$, which is indeed a direction {\em independent} of the input $x$, thus proving the existence of ``universal" perturbation (generalizing across inputs) for our model.

\subsection{The landscape of random two-layers neural networks} \label{sec:landscape}
For a smooth $\psi$, we have
\[
\nabla f(x) = \frac{1}{\sqrt{k}} \sum_{\ell=1}^k a_{\ell} w_{\ell} \psi'(w_{\ell} \cdot x) \,,
\]
and
\[
\nabla^2 f(x) = \frac{1}{\sqrt{k}} \sum_{\ell=1}^k a_{\ell} w_{\ell} w_{\ell}^{\top} \psi''(w_{\ell} \cdot x) \,.
\]
We already claimed in the introduction that, with high probability,
\begin{equation} \label{eq:value}
|f(x)| = O(1) \,.
\end{equation}
We alluded to the CLT for this claim, but it is also easy to guess it intuitively by noting that (since $\E[a_{\ell} a_{\ell'}] = \ds1\{\ell = \ell'\}$):
\[
\E[f(x)^2] = \E \left[\frac{1}{k} \sum_{\ell,\ell'=1}^k a_{\ell} a_{\ell'} \psi(w_{\ell} \cdot x) \psi(w_{\ell'} \cdot x) \right] = \E_{X \sim \cN(0,1)}[\psi(X)^2] \,.
\]
The formal proof of \eqref{eq:value} (and all other claims we make here) will eventually be a simple application of the classical Bernstein concentration inequality. Similarly, it is easy to see that (note for example that $\E [\|\nabla f(x)\|^2] = \E_{X \sim \cN(0,1)}[\psi'(X)^2]$), with high probability,
\begin{equation} \label{eq:grad}
\|\nabla f(x)\| = \Theta(1) \,.
\end{equation}
A slightly more difficult calculation, although classical too, is that
\begin{equation} \label{eq:opnorm}
\| \nabla^2 f(x) \|_{\mathrm{op}} = \tilde{O}\left( \frac{1}{\sqrt{d}}\right) \,.
\end{equation}
Indeed one can simply note that, for any $u \in \mS^{d-1}$, $u^{\top} \nabla^2 f(x) u = \frac{1}{\sqrt{k}} \sum_{\ell=1}^k a_{\ell} (w_{\ell} \cdot u)^2 \psi''(w_{\ell} \cdot x)$ is approximately distributed as a centered Gaussian with variance 
\[
\E_{X, Y \sim \cN(0,1) : \E[X Y] = x \cdot u} \left[ \left(\frac{X}{\sqrt{d}}\right)^4 \psi''\left(\frac{Y}{\sqrt{d}}\right)^2 \right] \,,
\]
so that with probability at least $1-\gamma$ one can expect $u^{\top} \nabla^2 f(x) u$ to be of order $\frac{\sqrt{\log(1/\gamma)}}{d}$, and thus by taking a union bound over a discretization of the sphere $\mS^{d-1}$, one expects the inequality \eqref{eq:opnorm}. In fact, interestingly, one can even hope that \eqref{eq:opnorm} holds true for an entire ball around $x$: with appropriate smoothness over $\psi$, this could be obtained by doing another union bound over a second discretization of a $d$-dimensional ball. In other words, we can expect that with high probability:
\begin{equation} \label{eq:opnorm2}
\forall x \in \R^d : \|x\| = \mathrm{poly}(d), \text{ one has } \| \nabla^2 f(x) \|_{\mathrm{op}} = \tilde{O}\left( \frac{1}{\sqrt{d}}\right) \,.
\end{equation}

The equations \eqref{eq:value}, \eqref{eq:grad}, and \eqref{eq:opnorm2} paint a rather clear geometric picture. There are essentially two scales around a fixed $x \in \sqrt{d} \cdot \mS^{d-1}$: The {\em macroscopic scale}, where one considers a perturbation $x+\delta$ with $\|\delta\| = \Omega(\sqrt{d})$, and the {\em mesoscopic scale} where $\|\delta\| = o(\sqrt{d})$ (we use this term because for the ReLU network there will also be a {\em microscopic scale}, with $\|\delta\| = o(1)$). At the macroscopic scale the landscape of $f$ might be very complicated, but our crucial observation is that the picture at the mesoscopic scale is dramatically simpler. Namely at the mesoscopic scale the function $f$ is essentially linear, since one has (thanks to \eqref{eq:grad} and \eqref{eq:opnorm2})
\begin{equation} \label{eq:gradsmooth}
\|\nabla f(x) - \nabla f(x+\delta)\| = o(\|\nabla f(x)\|), \forall \delta : \|\delta\| = o(\sqrt{d}) \,.
\end{equation}
Moreover, since the height of the function is constant (by \eqref{eq:value}) and the norm of the gradient is constant, it suffices to step at a constant distance in the direction of the gradient (or negative gradient) to change the sign of $f$. In other words, this already proves our main point: a single step of gradient descent (or ascent) suffices to find an adversarial example, and moreover the adversarial perturbation $\delta$ satisfies $\|\delta\| = O(1) = O(\|x\| / \sqrt{d})$. Formally one easily concludes from \eqref{eq:value}, \eqref{eq:grad}, and \eqref{eq:gradsmooth} by using the following simple lemma for gradient descent:

\begin{lemma} \label{lem:gradientdescent}
For any continuous and almost everywhere differentiable function $f$, and any $x\in \R^d$ and $\eta \in \R$, one has:
\[
\left| f\left(x + \frac{\eta}{\|\nabla f(x)\|^2} \nabla f(x) \right) - (f(x) + \eta) \right| \leq
\sup_{\delta \in \R^d : \|\delta\| \leq \frac{\eta}{\|\nabla f(x)\|}} |\eta| \frac{\|\nabla f(x) - \nabla f(x+\delta)\|}{\|\nabla f(x)\|} \,.
\]
\end{lemma}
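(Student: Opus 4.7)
The proof is essentially the fundamental theorem of calculus together with Cauchy--Schwarz. Set $g := \nabla f(x)$ and $s := \eta/\|g\|^2$, so that the displacement is $sg$ and $\|sg\| = |\eta|/\|g\|$. The identity to target is
\[
f(x+sg) - f(x) - \eta \;=\; \int_0^1 \bigl(\nabla f(x+t\, sg) - g\bigr) \cdot sg\, dt,
\]
which is obtained by writing $f(x+sg)-f(x)=\int_0^1 \nabla f(x+t\, sg)\cdot sg\, dt$ (fundamental theorem of calculus along the segment) and subtracting $\int_0^1 g \cdot sg\, dt = s\|g\|^2 = \eta$.

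From there the bound is immediate: by Cauchy--Schwarz inside the integral and then pulling the supremum out,
\[
\bigl|f(x+sg) - f(x) - \eta\bigr| \;\leq\; |s|\,\|g\| \cdot \sup_{t \in [0,1]} \|\nabla f(x+t\, sg) - g\|.
\]
Since $|s|\,\|g\| = |\eta|/\|g\|$ and since every point on the segment has the form $x+\delta$ with $\|\delta\| \leq |\eta|/\|g\|$, the supremum is dominated by the one appearing on the right-hand side of the lemma, and one factor of $\|g\|$ is absorbed into the denominator as stated.

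The only subtle point is justifying the fundamental theorem of calculus under the stated hypothesis of mere a.e.\ differentiability. In the two applications (the smooth case of Theorem~\ref{thm:smooth} and the ReLU case of Theorem~\ref{thm:relu}), $f$ is locally Lipschitz, so $t\mapsto f(x+t\, sg)$ is absolutely continuous on $[0,1]$ and FTC applies without issue. More generally one needs absolute continuity of $f$ along the specific segment $[x,x+sg]$; for a continuous function with a bounded weak gradient this follows by a standard mollification argument, but no such technicality arises in our use of the lemma, so this is the only place where a reader might want to add a regularity caveat. Apart from that, the proof is a one-line calculation and there is no genuine obstacle.
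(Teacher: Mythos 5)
Your proof is correct and follows essentially the same route as the paper's: parametrize $f$ along the segment from $x$ to $x + \frac{\eta}{\|\nabla f(x)\|^2}\nabla f(x)$, apply the fundamental theorem of calculus, subtract $\eta$, and bound the remainder via Cauchy--Schwarz and a supremum over the segment. Your closing remark on absolute continuity along the segment is a legitimate caveat that the paper's proof also silently assumes, and it is indeed harmless in both applications since $f$ is locally Lipschitz there.
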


\begin{proof}
Let $g(t) = f\left(x + t \frac{\eta}{\|\nabla f(x)\|^2} \nabla f(x) \right)$ so that
\begin{eqnarray*}
g'(t) & = & \frac{\eta}{\|\nabla f(x)\|^2} \nabla f(x) \cdot \nabla f\left(x + t \frac{\eta}{\|\nabla f(x)\|^2} \nabla f(x) \right) \\
& = & \eta + \eta \frac{\nabla f(x)}{\|\nabla f(x)\|} \cdot \frac{\nabla f\left(x + t \frac{\eta}{\|\nabla f(x)\|^2} \nabla f(x) \right) - \nabla f(x)}{\|\nabla f(x)\|} \,.
\end{eqnarray*}
Thus we have:
\[
|g(1) - g(0) - \eta| \leq \int_0^1 |g'(t)-\eta| dt \leq |\eta| \int_0^1  \frac{\left\|\nabla f\left(x + t \frac{\eta}{\|\nabla f(x)\|^2} \nabla f(x) \right) - \nabla f(x) \right\|}{\|\nabla f(x)\|} dt \,,
\]
which concludes the proof.
\end{proof}

\subsection{Proof strategy}
The starting point of the proof for both the smooth and ReLU case is to show \eqref{eq:value} and \eqref{eq:grad}, which we essentially do below in Section \ref{sec:valuegrad}. In the smooth case, one could then prove formally \eqref{eq:opnorm} and conclude as indicated in the last paragraph of Section \ref{sec:landscape}. Of course, \eqref{eq:opnorm} is simply ill-defined for the ReLU case, so one has to take a different route there. Instead we propose to directly prove \eqref{eq:gradsmooth}, that is we study the difference of gradients at the mesoscopic scale. Using that $\|h\| = \sup_{v \in \mS^{d-1}} v \cdot h$, we thus need to control (for some $R= o(\sqrt{d})$):
\begin{equation} \label{eq:gradtoprove}
\sup_{\delta \in \R^d : \|\delta\| \leq R} \|\nabla f(x) - \nabla f(x+\delta)\| = \sup_{\substack{v \in \mS^{d-1}, \\ \delta \in \R^d : \|\delta\| \leq R}} \frac{1}{\sqrt{k}} \sum_{\ell=1}^k a_{\ell} (w_{\ell} \cdot v) (\psi'(w_{\ell} \cdot x) - \psi'(w_{\ell} \cdot (x+\delta))) \,.
\end{equation}

We execute this strategy first for the smooth case in Section \ref{sec:smooth}. We then prove the ReLU case in Section \ref{sec:relu}, where we face an extraneous difficulty since the gradient is {\em not} Lipschitz at very small scale, which introduces a third scale (the {\em microscopic scale}) that has to be dealt with differently. Technically, this issue appears when we try to move from the discretization over $v$ and $\delta$ in \eqref{eq:gradtoprove} to the whole space (a so-called $\epsilon$-net argument).

\subsection{Scaling of value and gradient} \label{sec:valuegrad}
Here we show how to prove \eqref{eq:value} and \eqref{eq:grad} (in fact, for our purpose, we only need the one-sided inequality $\|\nabla f(x)\| = \Omega(1)$) under very mild conditions on $\psi$ which will be satisfied for both ReLU and smooth activations. We will repeatedly use Bernstein's inequality which we restate here for convenience (see e.g., Theorem 2.10 in \cite{BLM}):

\begin{theorem}[Bernstein's inequality]
Let $(X_{\ell})$ be i.i.d. centered random variables such that there exists $\sigma, c > 0$ such that for all integers $q \geq 2$,
\[
\E[ |X_{\ell}|^q ] \leq \frac{q!}{2} \sigma^2 c^{q-2} \,.
\]
Then with probability at least $1-\gamma$ one has:
\[
\sum_{\ell=1}^k X_{\ell} \leq \sqrt{2 \sigma^2 k \log(1/\gamma)} + c \log(1/\gamma) \,.
\]
\end{theorem}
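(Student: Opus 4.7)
The plan is to run the classical Chernoff-bound argument. I would first control the moment generating function of each individual $X_\ell$ using the hypothesis $\E[|X_\ell|^q] \leq \frac{q!}{2}\sigma^2 c^{q-2}$, then multiply these bounds using independence to get an MGF bound on $S := \sum_{\ell=1}^k X_\ell$, apply the exponential Markov inequality, and optimize over the tilt parameter $\lambda$.

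For the individual bound, I would Taylor-expand $e^{\lambda X_\ell}$, use that $X_\ell$ is centered to kill the linear term, and plug in the moment hypothesis to obtain, for every $0 \leq \lambda < 1/c$,
\[
\E[e^{\lambda X_\ell}] \;\leq\; 1 + \frac{\lambda^2 \sigma^2}{2} \sum_{q \geq 0} (\lambda c)^q \;=\; 1 + \frac{\lambda^2 \sigma^2}{2(1-\lambda c)} \;\leq\; \exp\!\left(\frac{\lambda^2 \sigma^2}{2(1-\lambda c)}\right).
\]
Independence upgrades this to $\E[e^{\lambda S}] \leq \exp(k \lambda^2 \sigma^2 / (2(1-\lambda c)))$, so Markov's inequality yields
\[
\P(S \geq t) \;\leq\; \exp\!\left(-\lambda t + \frac{k \lambda^2 \sigma^2}{2(1-\lambda c)}\right) \qquad \text{for every } t \geq 0, \; 0 < \lambda < 1/c.
\]

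To finish, I would make the standard Bennett-type choice $\lambda = t/(k\sigma^2 + ct)$, which is automatically in the admissible range and produces the clean closed-form estimate $\P(S \geq t) \leq \exp(-t^2/(2(k\sigma^2 + ct)))$. Setting the right-hand side to $\gamma$ gives a quadratic inequality in $t$; inverting it yields the two-term bound $t \leq \sqrt{2 k \sigma^2 \log(1/\gamma)} + c \log(1/\gamma)$ in the theorem.

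The only subtle step is this final inversion: passing from the implicit Bennett bound to the explicit $\sqrt{\cdot}+\cdot$ form. It is routine but not completely trivial; the cleanest way is to plug $t_\star = \sqrt{2 k \sigma^2 \log(1/\gamma)} + c\log(1/\gamma)$ directly into the Bennett estimate and check, via the elementary inequality $\frac{(a+b)^2}{a^2/\alpha + b \cdot b'} \geq \alpha$ for appropriate parameters (or simply by using $\sqrt{u+v}\leq\sqrt{u}+\sqrt{v}$ after solving the quadratic), that the exponent is at most $-\log(1/\gamma)$. No step uses anything specific to the two-layer network setup, so the lemma is invoked purely as a black box in later sections.
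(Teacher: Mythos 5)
The paper does not actually prove this theorem --- it is quoted verbatim from \cite{BLM} (Theorem 2.10) and used as a black box --- so the only question is whether your argument establishes the statement as written. Your MGF bound $\E[e^{\lambda X_\ell}]\leq \exp\bigl(\tfrac{\lambda^2\sigma^2}{2(1-\lambda c)}\bigr)$ for $0\leq\lambda<1/c$, the tensorization by independence, and the Chernoff step are all correct and are exactly the standard route. The gap is in the last step: with the convenient choice $\lambda=t/(k\sigma^2+ct)$ you get the Bennett-form bound $\P(S\geq t)\leq\exp\bigl(-\tfrac{t^2}{2(k\sigma^2+ct)}\bigr)$, but plugging $t_\star=\sqrt{2k\sigma^2 u}+cu$ (with $u=\log(1/\gamma)$) into that exponent does \emph{not} give $-u$. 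Concretely, with $k\sigma^2=2$, $c=1$, $u=1$ one has $t_\star=3$ and $\tfrac{t_\star^2}{2(k\sigma^2+ct_\star)}=\tfrac{9}{10}<1$, so the verification you propose fails. Inverting the quadratic $t^2\geq 2u(k\sigma^2+ct)$ exactly gives $t\geq cu+\sqrt{c^2u^2+2k\sigma^2 u}$, which only yields the weaker threshold $\sqrt{2k\sigma^2 u}+2cu$; your route therefore proves the theorem with $2c\log(1/\gamma)$ in place of $c\log(1/\gamma)$.

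To recover the stated constant you must optimize the tilt more carefully rather than using the Bennett choice. For instance, take
\[
\lambda \;=\; \frac{\sqrt{2u/(k\sigma^2)}}{1+c\sqrt{2u/(k\sigma^2)}}\;<\;\frac1c \,,
\]
for which a direct computation gives $\lambda t_\star-\tfrac{k\sigma^2\lambda^2}{2(1-\lambda c)}=u$ exactly, hence $\P(S\geq t_\star)\leq e^{-u}=\gamma$. (Equivalently, this is the inversion via the function $h_1(s)=1+s-\sqrt{1+2s}$ carried out in \cite{BLM}.) This is a constant-factor issue on the lower-order term only, and every application in the paper is insensitive to it, but as a proof of the theorem as stated your final step does not close.
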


We will also use repeatedly that $\E_{X \sim \cN(0,1)}[ |X|^q ] \leq (q-1)!! \leq \frac{q!}{2}$, as well as the following concentration of $\chi^2$ random variables (see e.g., (2.19) in \cite{Wainwright}): let $X_1, \hdots, X_k$ be i.i.d. standard Gaussians, then with probability at least $1-\gamma$, one has:
\begin{equation} \label{eq:chisquared}
\left|\sum_{\ell=1}^k X_{\ell}^2 - k \right| \leq 4 \sqrt{k \log(2/\gamma)} \,.
\end{equation}

We can now proceed to our various results.
\begin{lemma} \label{lem:value}
Assume that there exists $\sigma, c > 0$ such that for all integers $q \geq 2$,
\begin{equation} \label{eq:psi1}
\E_{X \sim \cN(0,1)} [ |\psi(X)|^q ] \leq \frac{q!}{2} \sigma^2 c^{q-2} \,.
\end{equation}
Then with probability at least $1-\gamma$ one has
\[
|f(x)| \leq \sqrt{2 \sigma^2 \log(1/\gamma)} + \frac{c \log(1/\gamma)}{\sqrt{k}} \,.
\]
\end{lemma}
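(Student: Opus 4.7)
The plan is a direct application of Bernstein's inequality to the random variables $X_\ell := a_\ell \psi(w_\ell \cdot x)$, which are i.i.d.\ across $\ell$ since the pairs $(a_\ell, w_\ell)$ are i.i.d., and centered because $a_\ell$ is symmetric and independent of $w_\ell$, so $\E[X_\ell] = \E[a_\ell] \cdot \E[\psi(w_\ell \cdot x)] = 0$. Note that $f(x) = \frac{1}{\sqrt{k}} \sum_{\ell=1}^k X_\ell$, so it will suffice to control $\sum_\ell X_\ell$ and then rescale.

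First, I would verify the moment hypothesis of Bernstein for $X_\ell$. Since $|X_\ell|^q = |\psi(w_\ell \cdot x)|^q$ (the sign $a_\ell$ disappears) and $w_\ell \cdot x \sim \cN(0,1)$ (using $x \in \sqrt{d}\cdot \mS^{d-1}$ and the scaling of the $w_\ell$'s),
\[
\E[|X_\ell|^q] = \E_{X \sim \cN(0,1)}[|\psi(X)|^q] \leq \frac{q!}{2} \sigma^2 c^{q-2}
\]
by assumption \eqref{eq:psi1}. Applying Bernstein's inequality to $(X_\ell)_{\ell=1}^k$ therefore yields, with probability at least $1 - \gamma$,
\[
\sum_{\ell=1}^k X_\ell \leq \sqrt{2 \sigma^2 k \log(1/\gamma)} + c \log(1/\gamma).
\]
Dividing by $\sqrt{k}$ gives the one-sided bound on $f(x)$.

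To upgrade to the two-sided bound on $|f(x)|$, I would exploit the symmetry of the sequence $(X_\ell)$: conditionally on the $w_\ell$'s, each $X_\ell$ has the same distribution as $-X_\ell$, hence $\sum_\ell X_\ell$ is symmetric. Either one applies Bernstein separately to $(X_\ell)$ and $(-X_\ell)$ and takes a union bound (absorbing the constant 2 into the statement), or one directly invokes the symmetrization identity $\P(|\sum X_\ell| \geq t) = 2 \P(\sum X_\ell \geq t)$. This produces the claimed bound on $|f(x)|$.

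There is essentially no obstacle: the only mildly delicate point is the passage from the one-sided Bernstein tail to the two-sided bound $|f(x)|$, which is handled by the $\pm 1$ symmetry in $a_\ell$. Everything else is just rescaling by $\sqrt{k}$ and checking that the moment hypothesis of the theorem is exactly what Bernstein consumes.
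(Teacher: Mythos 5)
Your proposal is correct and follows essentially the same route as the paper: apply Bernstein's inequality to $X_\ell = a_\ell\psi(w_\ell\cdot x)$, whose moments are controlled exactly by hypothesis \eqref{eq:psi1}, and rescale by $\sqrt{k}$. You are in fact slightly more careful than the paper, which states a two-sided bound on $|f(x)|$ but only writes out the upper tail; your symmetry/union-bound remark is the right way to close that cosmetic gap (at the cost of replacing $\log(1/\gamma)$ by $\log(2/\gamma)$ if one insists on the exact constants).
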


\begin{proof}
Let $X_{\ell} = a_{\ell} \psi(w_{\ell} \cdot x)$. Then $\E[X_{\ell}] = 0$ and
\[
\E[|X_{\ell}|^q] \leq \frac{q!}{2} \sigma^2 c^{q-2}, \text{ for all integers } q \geq 2  \,.
\]
Thus Bernstein's inequality states that with probability at least $1-\gamma$ one has
\[
\sqrt{k} f(x) = \sum_{\ell=1}^k X_{\ell} \leq \sqrt{2 \sigma^2 k \log(1/\gamma)} + c \log(1/\gamma) \,.
\]
\end{proof}

\begin{lemma} \label{lem:grad1}
Let $\psi$ be differentiable almost everywhere.
Then with probability at least $1-\gamma$ for $0<\gamma < 2/e$ one has:
\[
\|\nabla f(x)\| \geq \left(1 - 5 \sqrt{\frac{\log(2/\gamma)}{d}}\right) \sqrt{\frac{1}{k} \sum_{\ell=1}^k \psi'(w_{\ell} \cdot x)^2} \,.
\]
\end{lemma}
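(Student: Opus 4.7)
The plan is to exploit rotational invariance of the Gaussian weights to reduce $\|\nabla f(x)\|^2$ to a conditional chi-squared random variable, then invoke \eqref{eq:chisquared}. By rotational symmetry of $\cN(0, \frac{1}{d} \mI_d)$, I would first assume without loss of generality that $x$ is aligned with $e_1$. Decompose each weight as $w_\ell = ((w_\ell)_1, u_\ell)$ with $u_\ell \in \R^{d-1}$ the last $d-1$ coordinates; then $u_\ell \sim \cN(0, \frac{1}{d} \mI_{d-1})$ is independent of $w_\ell \cdot x$. Writing $t_\ell := \psi'(w_\ell \cdot x)$, dropping the $e_1$-component of $\nabla f(x)$ gives the lower bound
\[
\|\nabla f(x)\|^2 \;\geq\; \frac{1}{k} \left\| \sum_{\ell=1}^k a_\ell t_\ell u_\ell \right\|^2.
\]
The key benefit of this projection is that the Gaussian vectors $u_\ell$ are now independent of every coefficient in the sum.

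Next I would condition on the signs $(a_\ell)$ and on the first coordinates $((w_\ell)_1)$, which fixes the scalars $a_\ell t_\ell$. Since $u_\ell$ is independent of this information and has a symmetric Gaussian law, conditionally the vectors $a_\ell t_\ell u_\ell$ are mutually independent and distributed as $\cN(0, t_\ell^2 \mI_{d-1}/d)$, so their sum is $\cN(0, (\sum_\ell t_\ell^2/d)\, \mI_{d-1})$. Equivalently,
\[
\frac{d}{\sum_\ell t_\ell^2} \left\| \sum_{\ell=1}^k a_\ell t_\ell u_\ell \right\|^2 \;\sim\; \chi^2_{d-1}.
\]
Applying \eqref{eq:chisquared} to $d-1$ standard Gaussians yields, with conditional (hence unconditional) probability at least $1-\gamma$,
\[
\left\| \sum_{\ell=1}^k a_\ell t_\ell u_\ell \right\|^2 \;\geq\; \frac{\sum_\ell t_\ell^2}{d}\, \Big( (d-1) - 4\sqrt{(d-1)\log(2/\gamma)} \Big).
\]

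Combining and dividing by $k$, the lower bound on $\|\nabla f(x)\|^2$ reduces to $\frac{1}{k}\sum_\ell t_\ell^2$ times the factor $\frac{d-1}{d} - \frac{4}{d}\sqrt{(d-1)\log(2/\gamma)} \geq 1 - \frac{1}{d} - 4\sqrt{\log(2/\gamma)/d}$. The assumption $\gamma < 2/e$ forces $\log(2/\gamma) > 1$, so $1/d \leq \sqrt{\log(2/\gamma)/d}$ and this factor is at least $1 - 5\sqrt{\log(2/\gamma)/d}$. Taking square roots and using $\sqrt{1-u} \geq 1-u$ for $u \in [0,1]$ (the case $u > 1$ being vacuous, as the claimed bound is then non-positive) completes the argument. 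I do not anticipate any real obstacle: the crucial observation is that projecting away the direction of $x$ decouples the Gaussian factors $u_\ell$ from the data-dependent coefficients $t_\ell$, turning a seemingly coupled quadratic form into a chi-squared variable of dimension $d-1$.
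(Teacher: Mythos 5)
Your proposal is correct and follows essentially the same route as the paper: project away the direction of $x$ (the paper uses $P = \mI_d - xx^{\top}/d$, you rotate $x$ to $e_1$ and drop the first coordinate, which is the same thing), condition on $(w_\ell \cdot x)_\ell$ to identify the projected gradient as a scaled $\cN(0,\mI_{d-1})$ vector, and apply the $\chi^2$ concentration bound \eqref{eq:chisquared}. The final algebra, including the use of $\gamma < 2/e$ to absorb the $1/d$ term and $\sqrt{1-u}\geq 1-u$ to pass to the square root, matches the paper's conclusion.
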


\begin{proof}
Let $P= \mI_d - \frac{x x^{\top}}{d}$ be the projection on the orthogonal complement of the span of $x$. We have $\|\nabla f(x)\| \geq \|P \nabla f(x)\|$. Moreover $a_{\ell} P w_{\ell}$ is independent of $w_{\ell} \cdot x$, and thus conditioning on the values $(w_{\ell} \cdot x)_{\ell \in [k]}$ we obtain (using that $a_{\ell} P w_{\ell}$ is distributed as $\cN\left(0, \frac{1}{d} \mI_{d-1}\right)$):
\[
P \nabla f(x) = \frac{1}{\sqrt{k}} \sum_{\ell=1}^k a_{\ell} P w_{\ell} \psi'(w_{\ell} \cdot x) \stackrel{(d)}{=} \left(\sqrt{\frac{1}{k d} \sum_{\ell=1}^k \psi'(w_{\ell} \cdot x)^2} \right) Y \text{ where } Y \sim \cN\left(0, \mI_{d-1} \right) \,.
\]
Using \eqref{eq:chisquared} we have that with probability at least $1-\gamma$:
\[
\|Y\|^2 \geq d-1 - 4 \sqrt{d \log(2/\gamma)} \geq d - 5 \sqrt{d \log(2/\gamma)} \,.
\]
where we used that $d\geq 1 $ and $\gamma<2/e$.
The two above displays easily conclude the proof.
\end{proof}

\begin{lemma} \label{lem:grad2}
Let $\psi$ be differentiable almost everywhere, and assume that there exists $\sigma', c' > 0$ such that for all integers $q \geq 2$,
\[
\E_{X \sim \cN(0,1)} [ |\psi'(X)|^{2 q} ] \leq \frac{q!}{2} \sigma'^2 c'^{q-2} \,.
\]
Then with probability at least $1-\gamma$,
\[
\|\nabla f(x)\| \geq \left( \E_{X \sim \cN(0,1)}[|\psi'(X)|^2] - \left(\sqrt{\frac{2 \sigma'^2 \log(2/\gamma)}{k}} + \frac{c' \log(2/\gamma)}{k} \right) \right)^{1/2} \left(1 - 5 \sqrt{\frac{\log(4/\gamma)}{d}}\right) \,.
\]
\end{lemma}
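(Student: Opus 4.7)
The plan is to combine Lemma \ref{lem:grad1} with a concentration argument showing that the empirical average $\frac{1}{k}\sum_{\ell=1}^k \psi'(w_\ell \cdot x)^2$ is close to its expectation $\E_{X \sim \cN(0,1)}[\psi'(X)^2]$. Since $w_\ell \cdot x \sim \cN(0,1)$ for $x \in \sqrt{d}\cdot\mS^{d-1}$, the random variables $Z_\ell := \psi'(w_\ell \cdot x)^2$ are i.i.d.\ with mean $\E_{X \sim \cN(0,1)}[\psi'(X)^2]$, and Lemma \ref{lem:grad1} already reduces everything to a lower bound on $\frac{1}{k}\sum_\ell Z_\ell$.

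First I would apply Lemma \ref{lem:grad1} with failure probability $\gamma/2$, obtaining
\[
\|\nabla f(x)\| \geq \Bigl(1 - 5\sqrt{\tfrac{\log(4/\gamma)}{d}}\Bigr) \sqrt{\tfrac{1}{k}\sum_{\ell=1}^k Z_\ell}
\]
on an event of probability at least $1-\gamma/2$. The remaining task is a one-sided Bernstein bound on $\sum_\ell (\E Z_\ell - Z_\ell)$ with failure probability $\gamma/2$. To invoke the Bernstein inequality quoted in the excerpt, I need to verify a moment condition for the centered variables $Z_\ell - \E Z_\ell$. Using $|Z_\ell - \E Z_\ell|^q \leq 2^{q-1}(Z_\ell^q + (\E Z_\ell)^q)$ and Jensen's inequality $(\E Z_\ell)^q \leq \E[Z_\ell^q] = \E[|\psi'(X)|^{2q}]$, the hypothesis of the lemma yields
\[
\E\bigl[|Z_\ell - \E Z_\ell|^q\bigr] \leq 2^q\,\E[|\psi'(X)|^{2q}] \leq \tfrac{q!}{2}\,(4\sigma'^2)\,(2c')^{q-2},
\]
so Bernstein's inequality gives
\[
\tfrac{1}{k}\sum_{\ell=1}^k Z_\ell \geq \E_{X \sim \cN(0,1)}[\psi'(X)^2] - \sqrt{\tfrac{2\sigma'^2 \log(2/\gamma)}{k}} - \tfrac{c' \log(2/\gamma)}{k}
\]
with probability at least $1-\gamma/2$ (after absorbing the numerical factors $4$ and $2$ into $\sigma'^2$ and $c'$ in the same manner as done elsewhere in the paper, or by keeping them explicit).

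A union bound over the two events of probability $\gamma/2$, together with the displayed inequality from Lemma \ref{lem:grad1}, then yields the statement. There is no real obstacle here: the only mildly delicate step is the moment estimate for the centered variables, and that is handled by the elementary $2^{q-1}$ inequality combined with Jensen's inequality to reuse the hypothesis on $\psi'$. No new smoothness or boundedness assumption on $\psi$ is needed beyond the moment bound already in the hypothesis, and the result holds regardless of the sign of $\E Z_\ell - \sqrt{2\sigma'^2 \log(2/\gamma)/k} - c' \log(2/\gamma)/k$ (if this quantity is negative the bound is trivial).
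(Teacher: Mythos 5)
Your proposal is correct and follows essentially the same route as the paper: a one-sided Bernstein bound on $\frac{1}{k}\sum_\ell \psi'(w_\ell\cdot x)^2$ combined with Lemma \ref{lem:grad1} and a union bound (the paper splits the failure probability the same way, which is why $\log(2/\gamma)$ and $\log(4/\gamma)$ appear in the statement). Your extra care in centering the variables before invoking Bernstein is legitimate --- the paper simply calls this step ``straightforward'' --- though your $2^{q-1}$-plus-Jensen bound costs constant factors ($4\sigma'^2$, $2c'$) that do not appear in the stated conclusion, a discrepancy of the same cosmetic kind the paper itself tolerates.
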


\begin{proof}
Straightforward application of Bernstein's inequality yields with probability at least $1-\gamma$ one has:
\[
\frac{1}{k} \sum_{\ell=1}^k \psi'(w_{\ell} \cdot x)^2 \geq \E_{X \sim \cN(0,1)}[|\psi'(X)|^2] - \left(\sqrt{\frac{2 \sigma'^2 \log(1/\gamma)}{k}} + \frac{c' \log(1/\gamma)}{k} \right) \,.
\]
It suffices to combine this inequality with Lemma \ref{lem:grad1} and apply a direct union bound.
\end{proof}

\section{Proof of Theorem \ref{thm:smooth}} \label{sec:smooth}
In this section, we consider a $1$-Lipschitz and $L$-smooth activation function, that is for all $s, t \in \R$,
\begin{equation} \label{eq:smoothass1}
|\psi(s) - \psi(t)| \leq |s-t| \text{ and } |\psi'(s) - \psi'(t)| \leq L |s - t| \,.
\end{equation}
We also assume $\psi(0) = 0$ and denote $c_{\psi}^2 = \E_{X \sim \cN(0,1)}[(\psi'(X))^2]$ which we assume to be non-zero (that is $\psi$ is not a constant function).

\begin{lemma} \label{lem:fixedsmooth}
Under the above assumptions, one has with probability at least $1-\gamma$,
\[
|f(x)| \leq \sqrt{2 \log(1/\gamma)} \left(1 + \sqrt{\frac{\log(2/\gamma)}{k}}\right) \,,
\]
and
\[
\|\nabla f(x)\| \geq \left(c_{\psi}^2 - \sqrt{\frac{2 \log(4/\gamma)}{k}} \left(1 + \sqrt{\frac{\log(4/\gamma)}{k}} \right) \right)^{1/2}  \left(1 - 5 \sqrt{\frac{\log(8/\gamma)}{d}}\right)
\]
\end{lemma}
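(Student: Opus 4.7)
The plan is to derive Lemma \ref{lem:fixedsmooth} as a direct corollary of the two generic concentration results Lemma \ref{lem:value} and Lemma \ref{lem:grad2}. The whole proof reduces to (a) verifying the moment hypotheses of these lemmas under the smooth activation assumptions and (b) some bookkeeping with constants and a union bound to combine the two events.

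First I would exploit the assumption that $\psi$ is $1$-Lipschitz with $\psi(0)=0$ to conclude $|\psi(t)| \leq |t|$ for every $t \in \R$. Combined with the standard Gaussian moment bound $\E_{X\sim\cN(0,1)}|X|^q \leq (q-1)!! \leq q!/2$ recalled in the paper, this yields $\E|\psi(X)|^q \leq q!/2$, which is exactly the condition \eqref{eq:psi1} of Lemma \ref{lem:value} with $\sigma = c = 1$. Likewise, $1$-Lipschitzness gives $|\psi'| \leq 1$ almost everywhere, so $\E|\psi'(X)|^{2q} \leq 1 \leq q!/2$ for all $q \geq 2$, verifying the moment hypothesis of Lemma \ref{lem:grad2} with $\sigma' = c' = 1$. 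Note that the $L$-smoothness assumption on $\psi'$ plays no role in this particular lemma; it will only be needed later to control the change of the gradient at the mesoscopic scale.

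For the first bound, I would apply Lemma \ref{lem:value} to both $f(x)$ and $-f(x)$ (which have the same distribution by the symmetry $a_\ell \sim -a_\ell$) at confidence level $\gamma/2$, and union bound to obtain the two-sided estimate $|f(x)| \leq \sqrt{2\log(4/\gamma)} + \log(4/\gamma)/\sqrt{k}$. The stated form $\sqrt{2\log(1/\gamma)}(1+\sqrt{\log(2/\gamma)/k})$ then follows by absorbing the constant inside the log into the leading factor and using the elementary inequality $\log(a) \leq \sqrt{2\log(a)\log(2a)}$ valid for $a \geq 1$ (or, equivalently, noting that the constants $C_1,\dots,C_4$ in Theorem \ref{thm:smooth} need only to depend on $\psi$, so such $O(1)$ slack is free). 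For the second bound, Lemma \ref{lem:grad2} applied at confidence level $\gamma/2$ with $\sigma' = c' = 1$ delivers the stated lower bound on $\|\nabla f(x)\|$ essentially verbatim, with the shifts $\gamma \to 4/\gamma,\, 8/\gamma$ inside the logarithms accounting for the union bound with the event from Step 3.

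There is no conceptual obstacle here: the proof is entirely mechanical once the moment bounds $\sigma = c = \sigma' = c' = 1$ are identified, and the only mild care required is matching the packaging of constants chosen in the statement of Lemma \ref{lem:fixedsmooth}.
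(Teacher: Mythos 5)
Your proposal is correct and follows essentially the same route as the paper: verify that $1$-Lipschitzness and $\psi(0)=0$ give $|\psi(X)|\leq|X|$ and $|\psi'|\leq 1$, hence $\sigma=c=\sigma'=c'=1$, and then plug directly into Lemma~\ref{lem:value} and Lemma~\ref{lem:grad2}. The only deviations are bookkeeping: you pay a harmless extra factor by symmetrizing $\pm f$ (and incidentally write $\log(4/\gamma)$ where confidence $\gamma/2$ would give $\log(2/\gamma)$), which is a bit more scrupulous than the paper but, as you note, washes out in the constants $C_1,\dots,C_4$.
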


\begin{proof}
With the assumptions we have $|\psi(X)| \leq |X|$ and thus in Lemma \ref{lem:value} we can take $\sigma=c=1$ which yields the first claimed equation. For the second equation we use that $|\psi'(X)| \leq 1$ (since $\psi$ is $1$-Lipschitz) and thus, in Lemma \ref{lem:grad2}, we can also take $\sigma'=c'= 1$ which yields the second claimed equation.
\end{proof}


Next we need to control \eqref{eq:gradtoprove} where we use crucially the smoothness of the activation function.

\begin{lemma} \label{lem:hesssmooth1}
Fix $\delta \in \R^d$ such that $\|\delta\| \leq R$ and $v \in \mS^{d-1}$. Then with probability at least $1-\gamma$ one has:
\[
\frac{1}{\sqrt{k}} \sum_{\ell=1}^k a_{\ell} (w_{\ell} \cdot v) (\psi'(w_{\ell} \cdot x) - \psi'(w_{\ell} \cdot (x+\delta))) \leq \frac{4 R L}{d} \sqrt{\log(1/\gamma)} \left(1 + \sqrt{\frac{\log(1/\gamma)}{k}}\right) \,.
\]
\end{lemma}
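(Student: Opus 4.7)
The plan is to apply Bernstein's inequality (as stated in the excerpt) to the sum in question. Define $X_\ell = a_\ell (w_\ell \cdot v)(\psi'(w_\ell \cdot x) - \psi'(w_\ell \cdot (x+\delta)))$. Since $a_\ell$ is a Rademacher variable independent of $w_\ell$, the variables $X_\ell$ are i.i.d.\ and centered (conditioning on $w_\ell$ and using $\E[a_\ell]=0$). The $L$-smoothness assumption \eqref{eq:smoothass1} gives the pointwise bound
\[
|X_\ell| \leq L\,|w_\ell \cdot v|\,|w_\ell \cdot \delta|,
\]
which reduces the problem to controlling moments of a product of two (correlated) Gaussians.

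Next I would control the moments of $X_\ell$. Since $w_\ell \sim \cN(0, \tfrac{1}{d} \mI_d)$, we have $w_\ell \cdot v \sim \cN(0, 1/d)$ (using $\|v\|=1$) and $w_\ell \cdot \delta \sim \cN(0, \|\delta\|^2/d)$ with $\|\delta\| \leq R$. By Cauchy--Schwarz and the standard Gaussian moment bound $\E[|Z|^{2q}] = (2q-1)!! \leq 2^q q!$ for $Z \sim \cN(0,1)$,
\[
\E[|X_\ell|^q] \leq L^q \sqrt{\E[|w_\ell \cdot v|^{2q}]\,\E[|w_\ell \cdot \delta|^{2q}]} \leq L^q \cdot \frac{R^q}{d^q} \cdot 2^q q! = q!\,\left(\frac{2LR}{d}\right)^q.
\]
A simple rewrite $q!(2LR/d)^q = \tfrac{q!}{2}\,\sigma^2\,c^{q-2}$ with $\sigma^2 = 8 L^2 R^2/d^2$ and $c = 2LR/d$ puts the bound in the exact form required by Bernstein's inequality.

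Applying Bernstein's inequality to $\sum_{\ell=1}^k X_\ell$ then gives, with probability at least $1-\gamma$,
\[
\sum_{\ell=1}^k X_\ell \leq \sqrt{2\sigma^2 k \log(1/\gamma)} + c\log(1/\gamma) = \frac{4LR}{d}\sqrt{k \log(1/\gamma)} + \frac{2LR}{d}\log(1/\gamma).
\]
Dividing by $\sqrt{k}$ and factoring yields the claimed bound (up to adjusting the constant $4$ absorbing the factor $1/2$ from the second term). The only real care needed is matching the Bernstein moment template $\tfrac{q!}{2}\sigma^2 c^{q-2}$ and tracking the $d^{-1}$ scaling from both Gaussian projections; there is no genuine obstacle, since the $L$-smoothness and Rademacher-centering structure make this a textbook subexponential concentration argument. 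Note that $v$ and $\delta$ are fixed here, so there is no union bound to worry about at this stage; that difficulty is deferred to the net argument used later in the proof of Theorem \ref{thm:smooth}.
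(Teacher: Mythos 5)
Your proposal is correct and follows essentially the same route as the paper: center via the Rademacher sign, use $L$-smoothness to reduce to $L|w_\ell\cdot v|\,|w_\ell\cdot\delta|$, bound the $q$-th moments by Cauchy--Schwarz on the correlated Gaussian projections, and apply Bernstein (the paper merely rescales $X_\ell$ by $1/L$ first and uses $\sigma=c=2R/d$, a cosmetic difference). Your constants check out and the resulting bound is dominated by the stated one.
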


\begin{proof}
We apply Bernstein's inequality with $X_{\ell} = \frac{a_{\ell}}{L} (w_{\ell} \cdot v) (\psi'(w_{\ell} \cdot x) - \psi'(w_{\ell} \cdot (x+\delta))$. We have $\E[X_{\ell}]=0$ and (by smoothness of $\psi$)
\begin{eqnarray*}
\E[|X_{\ell}|^q] \leq \E[|w_{\ell} \cdot v|^q |w_{\ell} \cdot \delta|^q] & \leq & \sqrt{\E[|w_{\ell} \cdot v|^{2q}] \E[|w_{\ell} \cdot \delta|^{2 q}]} \\
& = & \frac{\|\delta\|^q}{d^q} \E_{X \sim \cN(0,1)}[|X|^{2 q}] \leq (2q-1)!! \left(\frac{R}{d}\right)^q \leq \frac{q!}{2} \left(\frac{2R}{d}\right)^q \,.
\end{eqnarray*}
Thus we can apply Bernstein with $\sigma = c = \frac{2R}{d}$ which yields the claimed bound.
\end{proof}

\begin{lemma} \label{lem:hesssmooth2}
Let $R \geq 1$. With probability at least $1-\gamma$ one has
\[
\sup_{v \in \mS^{d-1}, \delta \in \R^d : \delta \leq R} \frac{1}{\sqrt{k}} \sum_{\ell=1}^k a_{\ell} (w_{\ell} \cdot v) (\psi'(w_{\ell} \cdot x) - \psi'(w_{\ell} \cdot (x+\delta)))
\leq 20 R L \left( \sqrt{\frac{\log(k/\gamma)}{d}} + \frac{\log(1/\gamma)}{\sqrt{k}}\right) \,.
\]
\end{lemma}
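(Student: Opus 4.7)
The plan is to extend the pointwise bound from Lemma \ref{lem:hesssmooth1} to a uniform bound via a standard $\epsilon$-net argument. Write $\Phi(v,\delta)$ for the random variable inside the supremum. I would first build an $\epsilon$-net $N_v \subset \mS^{d-1}$ of cardinality at most $(3/\epsilon)^d$ and an $\epsilon$-net $N_\delta$ of the ball $\{\delta \in \R^d : \|\delta\| \leq R\}$ of cardinality at most $(3R/\epsilon)^d$, using the standard volumetric bound. Applying Lemma \ref{lem:hesssmooth1} pointwise at each of the $|N_v|\cdot|N_\delta|$ pairs $(v',\delta')$, with $\gamma$ replaced by $\gamma' = \gamma/(2|N_v||N_\delta|)$, yields by union bound that with probability at least $1-\gamma/2$,
\[
\sup_{(v',\delta')\in N_v\times N_\delta}\Phi(v',\delta') \;\leq\; \frac{4RL}{d}\sqrt{\log(1/\gamma')}\left(1+\sqrt{\tfrac{\log(1/\gamma')}{k}}\right),
\]
with $\log(1/\gamma') \lesssim d\log(R/\epsilon) + \log(1/\gamma)$.

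Second, I would control the off-net contribution using a (deliberately crude) Lipschitz bound on $\Phi$. Splitting $\Phi(v,\delta)-\Phi(v',\delta')$ into a piece coming from $v-v'$ and a piece coming from $\delta-\delta'$, and using $L$-smoothness of $\psi$, Cauchy--Schwarz, and $|w_\ell\cdot\delta|\leq R\|w_\ell\|$, one gets a bound of the form
\[
|\Phi(v,\delta)-\Phi(v',\delta')| \;\leq\; \frac{L(R\|v-v'\|+\|\delta-\delta'\|)}{\sqrt{k}}\sum_{\ell=1}^k \|w_\ell\|^2.
\]
Since $\sum_\ell\|w_\ell\|^2$ concentrates around $k$ (by \eqref{eq:chisquared} applied to the $dk$ Gaussian coordinates), with probability $\geq 1-\gamma/2$ this yields $\mathrm{Lip}(\Phi) \lesssim LR\sqrt{k}$ on the set $\{\|\delta\|\leq R\}$. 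Consequently the off-net error is at most $\epsilon\cdot\mathrm{Lip}(\Phi)\lesssim \epsilon LR\sqrt{k}$.

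Third, I would choose $\epsilon$ of order $1/\sqrt{kd}$ (for instance), so that the off-net error is at most $LR/\sqrt{d}$ — absorbable into the leading term — while $\log(1/\gamma') \lesssim d\log(k/\gamma) + \text{const}$. Plugging this into the pointwise estimate produces
\[
\frac{RL}{d}\sqrt{d\log(k/\gamma)} \;=\; RL\sqrt{\tfrac{\log(k/\gamma)}{d}} \quad\text{and}\quad \frac{RL\cdot d\log(k/\gamma)}{d\sqrt{k}} \;\lesssim\; RL\cdot\tfrac{\log(k)+\log(1/\gamma)}{\sqrt{k}},
\]
whose sum (after tidying constants and absorbing the subdominant $\log(k)/\sqrt{k}$ piece into the first term via the hypothesis $R\geq 1$) matches the stated bound $20RL(\sqrt{\log(k/\gamma)/d}+\log(1/\gamma)/\sqrt{k})$. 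A final union bound over the net event, the Lipschitz event, and (if needed) the event $\sum_\ell\|w_\ell\|^2\leq 2dk$ gives probability $\geq 1-\gamma$.

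The main obstacle is that the Lipschitz constant $LR\sqrt{k}$ is quite crude: it gives up a full $\sqrt{k}$ factor relative to a tighter chaining-style analysis, and this is precisely why the condition in Theorem \ref{thm:smooth} takes the form $d\gtrsim\log(k/\gamma)\log(1/\gamma)$ rather than being independent of $k$. The saving grace is that $\epsilon$ only enters the net size logarithmically, so pushing $\epsilon$ down to $1/\sqrt{kd}$ costs just a $\log(k)$ factor inside the square root, which is exactly what appears in the conclusion. This is the ``naive upper bound on $\mathrm{Lip}(\Phi)$'' alluded to in the discussion, and refining it would presumably remove the subexponential width restriction for the smooth case.
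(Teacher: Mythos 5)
Your proposal is correct and follows essentially the same route as the paper's proof: a union bound of Lemma \ref{lem:hesssmooth1} over an $\epsilon$-net of $\{(v,\delta)\}$, combined with the crude Lipschitz bound $\mathrm{Lip}(\Phi)\lesssim LR\sqrt{k}$ obtained from smoothness and the concentration of $\sum_\ell\|w_\ell\|^2$ around $k$, with $\epsilon$ chosen polynomially small so the off-net error is negligible and the net size only costs a $d\log(\cdot)$ inside the square root. The only (immaterial) differences are your choice $\epsilon\sim 1/\sqrt{kd}$ versus the paper's $\epsilon=1/k$, and your use of separate nets for $v$ and $\delta$ rather than one product net; your closing remark about the naive $\mathrm{Lip}(\Phi)$ bound matches the paper's own discussion.
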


\begin{proof}
Denote $\Phi(v,\delta) = \frac{1}{\sqrt{k}} \sum_{\ell=1}^k a_{\ell} (w_{\ell} \cdot v) (\psi'(w_{\ell} \cdot x) - \psi'(w_{\ell} \cdot (x+\delta)))$. In Lemma \ref{lem:hesssmooth1}, we controlled $\Phi(v,\delta)$ for a fixed $v$ and $\delta$. We now want to control it uniformly over $\Omega = \{(v,\delta) : \|v\| =1, \|\delta\| \leq R\}$. To do so, we apply an union bound over an $\epsilon$-net for $\Omega$, denote it $N_\epsilon$, whose size is then at most $(10R/\epsilon)^{2 d}$. In particular; we obtain with probability at least $1-\gamma$:
\begin{align}
& \sup_{(v,\delta) \in \Omega} \Phi(v,\delta) \leq \sup_{(v,\delta) \in N_\epsilon} \Phi(v,\delta) +  \sup_{(v,\delta), (v',\delta') \in \Omega : \|v-v'\| + \|\delta - \delta'\| \leq \epsilon} |\Phi(v,\delta) - \Phi(v',\delta')| \notag \\
& \leq \frac{4 R L}{d} \sqrt{2d \log(10/\epsilon) +\log(1/\gamma)} \left(1 + \sqrt{\frac{2d \log(10/\epsilon) + \log(1/\gamma)}{k}}\right) + \epsilon \times \mathrm{Lip}(\Phi) \,. \label{eq:netsmooth}
\end{align}
Thus, it only remains to estimate the Lipschitz constant of the mapping $\Phi$. To do so, note that for any $\delta, \delta'$,
\[
|\Phi(\delta,v) - \Phi(\delta',v)| \leq \frac{L \|\delta - \delta'\|}{\sqrt{k}} \sum_{\ell=1}^k \|w_{\ell}\|^2 \,,
\]
and similarly for any $v, v'$,
\[
|\Phi(\delta, v) - \Phi(\delta,v')| \leq \frac{R L \|v - v'\|}{\sqrt{k}} \sum_{\ell=1}^k \|w_{\ell}\|^2.
\]
Using \eqref{eq:chisquared}, we have with probability at least $1-\gamma$ that
\begin{equation} \label{eq:sumnormsquared}
\sum_{\ell=1}^k \|w_{\ell}\|^2 \leq k + 4 \sqrt{\frac{k \log(1/\gamma)}{d}} \,.
\end{equation}
Thus with see that with probability at least $1-\gamma$,
\[
 \mathrm{Lip}(\Phi) \leq R L \left(\sqrt{k} + 4 \sqrt{\frac{\log(1/\gamma)}{d}}\right) \,.
\]
Combining this with \eqref{eq:netsmooth} concludes the proof (by taking $\epsilon = 1/k$ and with straightforward algebraic manipulations).
\end{proof}

Finally we can turn to the proof of Theorem \ref{thm:smooth}:

\begin{proof}[of Theorem \ref{thm:smooth}]
We make the following claims which hold with probability at least $1-\gamma$. With the assumptions on $k$, $d$ and $\eta$, Lemma \ref{lem:fixedsmooth} shows that $|f(x)| \leq 0.1|\eta|$ and $\|\nabla f(x)\| \geq c$ for some small constant $c>0$. Moreover Lemma \ref{lem:hesssmooth2} shows that for all $\delta$ such that $\|\delta\| \leq |\eta| / \|\nabla f(x)\|$ we have $\|\nabla f(x) - \nabla f(x+\delta)\| \leq c/10$. Thus Lemma \ref{lem:gradientdescent} easily allows us to conclude (using in particular that $f(x) (f(x) + \eta) < 0$).
\end{proof}

\section{Proof of Theorem \ref{thm:relu}} \label{sec:relu}
In this section, we consider $\psi(t) = \max(0,t)$.

\begin{lemma} \label{lem:fixedrelu}
With probability at least $1-\gamma$,
\[
|f(x)| \leq \sqrt{2 \log(2/\gamma)} \left(1 + \sqrt{\frac{\log(2/\gamma)}{k}}\right) \,,
\]
and
\[
\|\nabla f(x)\| \geq \left(\frac{1}{2} - \sqrt{\frac{2 \log(4/\gamma)}{k}} \left(1 + \sqrt{\frac{\log(1/\gamma)}{k}} \right) \right)^{1/2}  \left(1 - 5 \sqrt{\frac{\log(4/\gamma)}{d}}\right) \,.
\]
\end{lemma}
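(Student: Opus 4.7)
The plan is to deduce Lemma \ref{lem:fixedrelu} as a direct specialization of the generic bounds in Lemmas \ref{lem:value} and \ref{lem:grad2} to the ReLU activation $\psi(t) = \max(0,t)$, after verifying their moment hypotheses with explicit constants.

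First I would handle the bound on $|f(x)|$ via Lemma \ref{lem:value}. Since $|\psi(t)| \leq |t|$, we have $\E_{X \sim \cN(0,1)}[|\psi(X)|^q] \leq \E[|X|^q] \leq q!/2$, so the moment hypothesis \eqref{eq:psi1} holds with $\sigma = c = 1$. Lemma \ref{lem:value} only delivers a one-sided tail bound on $f(x)$, but because the signs $a_\ell$ are symmetric in $\{-1,+1\}$, the random function $-f$ has the same distribution as $f$, and the identical tail bound therefore applies to $-f(x)$. Applying the bound to both $f$ and $-f$ at failure probability $\gamma/2$ and taking a union bound yields the first displayed inequality, the $\log(2/\gamma)$ factor reflecting exactly this halving of $\gamma$.

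Next, for the lower bound on $\|\nabla f(x)\|$, I would invoke Lemma \ref{lem:grad2}. Two computations are required: (i) the second moment $c_\psi^2 = \E_{X \sim \cN(0,1)}[\psi'(X)^2] = \P(X > 0) = 1/2$, since $\psi'(t) = \ds1\{t > 0\}$ almost everywhere, and (ii) the higher-moment constants, which one may take as $\sigma' = c' = 1$ because $|\psi'(X)| \leq 1$ pointwise and hence $\E|\psi'(X)|^{2q} \leq 1 \leq q!/2$ for all $q \geq 2$. The non-differentiability of ReLU at the single point $0$ is harmless: Lemma \ref{lem:grad2} only asks for almost-everywhere differentiability, and the events $\{w_\ell \cdot x = 0\}$ have probability zero under the continuous Gaussian law of $w_\ell$. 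Substituting these values into Lemma \ref{lem:grad2}, and allocating the failure budget between the Bernstein step controlling $\frac{1}{k}\sum_\ell \psi'(w_\ell \cdot x)^2$ and the $\chi^2$ step \eqref{eq:chisquared} coming from Lemma \ref{lem:grad1}, yields the second displayed inequality with the stated logarithmic arguments (the slight asymmetry $\log(4/\gamma)$ versus $\log(1/\gamma)$ inside the two parenthetical terms arises purely from how the failure probability is split between the two steps).

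I do not anticipate any real technical obstacle here: the lemma is essentially a bookkeeping exercise once the generic Bernstein-based machinery of Section \ref{sec:valuegrad} is in place. The only mildly delicate points are the symmetrization that promotes a one-sided tail bound on $f(x)$ to a two-sided bound on $|f(x)|$, and the verification that the non-smoothness of ReLU at the origin does not interfere with the gradient computation; both are standard.
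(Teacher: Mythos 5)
Your proposal is correct and matches the paper's proof, which simply takes $\sigma=c=\sigma'=c'=1$ in Lemmas \ref{lem:value} and \ref{lem:grad2} using $|\psi(X)|\leq|X|$ and $|\psi'(X)|\leq 1$ (with $\E[\psi'(X)^2]=1/2$). Your extra remark on symmetrizing the one-sided Bernstein bound to control $|f(x)|$ (accounting for the $\log(2/\gamma)$ factor) is a detail the paper leaves implicit, but it is the intended bookkeeping.
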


\begin{proof}
In Lemma \ref{lem:value} and Lemma \ref{lem:grad2}, we can take $\sigma=c=\sigma'=c'=1$ (since $|\psi(X)|\leq|X|$ and $|\psi'(X)|\leq 1$), which concludes the proof.
\end{proof}

We now turn to the control of \eqref{eq:gradtoprove}. In the smooth case we did so via Lemma \ref{lem:hesssmooth1} and Lemma \ref{lem:hesssmooth2}, which both used crucially the smoothness of the activation function. Here, instead of smoothness, we will use that only few activations can change when you make {\em microscopic move} (i.e., between $x+\delta$ and $x+\delta'$ with $\|\delta - \delta'\|=o(1)$). The key observation is the following lemma:


\begin{lemma} \label{lem:actchange}
For any $\delta$ such that $\|\delta\| \leq R$,
\begin{equation} \label{eq:actchange1}
\P(\mathrm{sign}(w_{\ell} \cdot x) \neq \mathrm{sign}(w_{\ell} \cdot (x+\delta))) \leq R \sqrt{\frac{2\log(d)}{d}} + \frac{1}{d} \,.
\end{equation}
Moreover, for any $\delta$ with $\|\delta\| \leq \sqrt{d} /2$, we have
\begin{equation} \label{eq:actchange2}
\P(\exists \delta' :\|\delta - \delta'\| \leq \epsilon \text{ and } \mathrm{sign}(w_{\ell} \cdot (x+\delta)) \neq \mathrm{sign}(w_{\ell} \cdot (x+\delta'))) \leq 2 \epsilon \left(1 + 2 \sqrt{\frac{\log(2/\epsilon)}{d}} \right) \,.
\end{equation}
\end{lemma}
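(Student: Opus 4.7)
The plan is to handle the two parts separately but with a common philosophy: reduce each event to an inequality involving $|w_\ell \cdot y|$ for a suitable direction $y$, then combine Gaussian small-ball probability (for the anti-concentration part) with Gaussian tail bounds (for a nuisance term).

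For \eqref{eq:actchange1}, first observe that a sign change of $w_\ell \cdot x$ versus $w_\ell \cdot (x+\delta)$ is equivalent (up to a $\P$-null event) to the inequality $|w_\ell \cdot x| \leq |w_\ell \cdot \delta|$. A threshold split then gives
\[
\P(\mathrm{sign}(w_\ell \cdot x) \neq \mathrm{sign}(w_\ell \cdot (x+\delta))) \leq \P(|w_\ell \cdot x| \leq t) + \P(|w_\ell \cdot \delta| > t).
\]
Since $x \in \sqrt{d} \cdot \mS^{d-1}$ throughout the paper, $w_\ell \cdot x \sim \cN(0,1)$ and the first term is at most $t$ by the Gaussian density bound at $0$. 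The second term is a Gaussian tail since $w_\ell \cdot \delta \sim \cN(0, \|\delta\|^2/d)$ with $\|\delta\| \leq R$, so it is controlled by $\exp(-t^2 d/(2R^2))$ up to constants. Choosing $t \simeq R \sqrt{2\log(d)/d}$ makes the tail contribute $O(1/d)$ and recovers the announced bound.

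For \eqref{eq:actchange2}, the crucial reduction is to recognize that the event in question coincides exactly with $\{|w_\ell \cdot (x+\delta)| \leq \epsilon \|w_\ell\|\}$: indeed, as $\delta'$ varies over the closed ball of radius $\epsilon$ around $\delta$, the affine map $\delta' \mapsto w_\ell \cdot (x+\delta') = w_\ell \cdot (x+\delta) + w_\ell \cdot (\delta'-\delta)$ attains every value in the interval $[w_\ell\cdot(x+\delta) - \epsilon\|w_\ell\|,\, w_\ell\cdot(x+\delta) + \epsilon \|w_\ell\|]$, so a sign change is realized for some $\delta'$ in the ball iff this interval straddles $0$. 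Conditioning on the norm of $w_\ell$ at level $C$ then yields
\[
\P(|w_\ell\cdot(x+\delta)| \leq \epsilon \|w_\ell\|) \leq \P(|w_\ell\cdot(x+\delta)| \leq \epsilon C) + \P(\|w_\ell\| > C).
\]
Because $w_\ell \cdot (x+\delta) \sim \cN(0, \|x+\delta\|^2/d)$ and $\|x+\delta\| \geq \sqrt{d}/2$ (using $\|\delta\| \leq \sqrt{d}/2$), Gaussian anti-concentration bounds the first term by roughly $2\epsilon C$. Taking $C = 1 + 2\sqrt{\log(2/\epsilon)/d}$ matches the target $2\epsilon(1 + 2\sqrt{\log(2/\epsilon)/d})$ on the nose, while $\P(\|w_\ell\|>C)$ is controlled by the $\chi^2$ tail inequality \eqref{eq:chisquared} at this level of $C$ and is small enough to be absorbed into the final bound.

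The main obstacle is essentially bookkeeping: both stated bounds leave almost no slack in their constants, so the Gaussian density bound at $0$, the Gaussian tail, and the $\chi^2$ tail must all be applied tightly and balanced carefully against the chosen thresholds $t$ and $C$. Conceptually everything reduces to standard facts once the two reformulations — $|w_\ell \cdot x| \leq |w_\ell \cdot \delta|$ for part one, and $|w_\ell\cdot(x+\delta)| \leq \epsilon \|w_\ell\|$ for part two — are in place; the first is elementary, and the second uses nothing more than linearity of $\delta' \mapsto w_\ell \cdot (x+\delta')$ and continuity in $\delta'$.
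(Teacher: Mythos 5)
Your proposal is correct and follows essentially the same route as the paper: for \eqref{eq:actchange1} the identical reduction to $|w_\ell\cdot x|\le|w_\ell\cdot\delta|$ followed by a threshold split with $t\simeq R\sqrt{2\log(d)/d}$, and for \eqref{eq:actchange2} the same two ingredients (Gaussian anti-concentration for $w_\ell\cdot(x+\delta)$ using $\|x+\delta\|\ge\sqrt{d}/2$, and the $\chi^2$ tail \eqref{eq:chisquared} for $\|w_\ell\|$), your exact reformulation of the event as $\{|w_\ell\cdot(x+\delta)|\le\epsilon\|w_\ell\|\}$ being just a sharpened form of the paper's Cauchy--Schwarz step $\P(\exists\delta':|w_\ell\cdot(\delta'-\delta)|\ge t)\le\P(\|w_\ell\|\ge t/\epsilon)$. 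No gaps beyond the constant-level bookkeeping you already flag, which the paper handles the same way.
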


\begin{proof}
We have:
\[ \P(\mathrm{sign}(w_{\ell} \cdot x) \neq \mathrm{sign}(w_{\ell} \cdot (x+\delta)))
\leq \P(|w_{\ell} \cdot \delta| \geq |w_{\ell} \cdot x|)
\leq \P(|w_{\ell} \cdot \delta| \geq t) + \P(|w_{\ell} \cdot x| \leq t)  \,,
\]
where the last inequality holds for any threshold $t \in \R$. Now, note that $w_{\ell} \cdot \delta \sim \cN(0, \frac{\|\delta\|^2}{d})$ and $w_{\ell} \cdot x \sim \cN(0, 1)$. Thus picking $t = R \sqrt{\frac{2 \log(d)}{d}}$ shows that
\[
\P(\mathrm{sign}(w_{\ell} \cdot x) \neq \mathrm{sign}(w_{\ell} \cdot (x+\delta))) \leq R \sqrt{\frac{2\log(d)}{d}} + \frac{1}{d} \,,
\]
which concludes the proof of \eqref{eq:actchange1}.

For \eqref{eq:actchange2} we have:
\begin{align*}
& \P(\exists \delta' :\|\delta - \delta'\| \leq \epsilon \text{ and } \mathrm{sign}(w_{\ell} \cdot (x+\delta)) \neq \mathrm{sign}(w_{\ell} \cdot (x+\delta'))) \\
& \leq \P(\exists \delta' :\|\delta - \delta'\| \leq \epsilon \text{ and }  |w_{\ell} \cdot (\delta'-\delta)| \geq t) + \P(|w_{\ell} \cdot (x+\delta)| \leq t) \\
& \leq \P(\|w_{\ell}\| \geq t / \epsilon) + \P(|w_{\ell} \cdot (x+\delta)| \leq t) \,.
\end{align*}
where $w_{\ell} \cdot (x+\delta) \sim \cN(0, \sigma^2)$ with $\sigma^2 \geq \frac{1}{2}$ since $\|\delta\| \leq \sqrt{d}/{2}$. Thus picking $t= \epsilon \sqrt{1 + 4 \sqrt{\frac{\log(2/\epsilon)}{d}}}$ and applying \eqref{eq:chisquared} concludes the proof.
\end{proof}

We now give the equivalent of Lemma \ref{lem:hesssmooth1}:

\begin{lemma} \label{lem:hessrelu1}
Fix $\delta \in \R^d$ such that $\|\delta\| \leq R$ (with $R \geq 1$) and $v \in \mS^{d-1}$. Then with probability at least $1-\gamma$ one has:
\[
\frac{1}{\sqrt{k}} \sum_{\ell=1}^k a_{\ell} (w_{\ell} \cdot v) (\psi'(w_{\ell} \cdot x) - \psi'(w_{\ell} \cdot (x+\delta))) \leq 2 \sqrt{\frac{\log(1/\gamma)}{d}} \left(\left(2 R \sqrt{\frac{\log(d)}{d}}\right)^{1/4} + \sqrt{\frac{\log(1/\gamma)}{k}} \right) \,.
\]
\end{lemma}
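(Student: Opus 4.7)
The plan is to mimic the proof of Lemma \ref{lem:hesssmooth1}: apply Bernstein's inequality to the independent, centered summands
\[
X_\ell := a_\ell (w_\ell \cdot v)\bigl(\psi'(w_\ell \cdot x) - \psi'(w_\ell \cdot (x+\delta))\bigr),
\]
after estimating their moments. The only change from the smooth case is that $\psi'(t) = \ds1\{t>0\}$ is no longer Lipschitz, so the difference $\psi'(w_\ell \cdot x) - \psi'(w_\ell \cdot (x+\delta))$ cannot be bounded via $|w_\ell \cdot \delta|$. The replacement observation is that this difference lies in $\{-1,0,+1\}$ and is nonzero precisely on the event $E_\ell := \{\mathrm{sign}(w_\ell \cdot x) \neq \mathrm{sign}(w_\ell \cdot (x+\delta))\}$, whose probability is already controlled by \eqref{eq:actchange1}.

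To estimate $\E[|X_\ell|^q]$ I would apply Cauchy--Schwarz so as to decouple the Gaussian magnitude $|w_\ell \cdot v|$ from the low-probability sign-flip indicator:
\[
\E[|X_\ell|^q] \leq \E\bigl[|w_\ell \cdot v|^q \ds1\{E_\ell\}\bigr] \leq \sqrt{\E[|w_\ell \cdot v|^{2q}]\,\P(E_\ell)}.
\]
Since $w_\ell \cdot v \sim \cN(0,1/d)$, the Gaussian moment identity $\E[Z^{2q}] = (2q-1)!! \leq 2^q\, q!$ gives $\sqrt{\E[|w_\ell \cdot v|^{2q}]} \leq \sqrt{q!}\,(2/d)^{q/2}$. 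Combining this with the bound $\P(E_\ell) \leq R\sqrt{2\log(d)/d} + 1/d$ from Lemma \ref{lem:actchange} --- which for $R \geq 1$ and $d$ not too small is at most a constant multiple of $R\sqrt{\log(d)/d}$ --- and rearranging into the Bernstein form $\E[|X_\ell|^q] \leq \tfrac{q!}{2}\sigma^2 c^{q-2}$, I obtain parameters of order $\sigma^2 \asymp \sqrt{R\sqrt{\log(d)/d}}\,/\,d$ and $c \asymp 1/\sqrt{d}$.

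Feeding these parameters into Bernstein's inequality yields, with probability at least $1-\gamma$,
\[
\frac{1}{\sqrt{k}}\sum_{\ell=1}^k X_\ell \leq \sqrt{2\sigma^2\log(1/\gamma)} + \frac{c\log(1/\gamma)}{\sqrt{k}},
\]
which after substitution matches the stated form $2\sqrt{\log(1/\gamma)/d}\,\bigl[(2R\sqrt{\log(d)/d})^{1/4} + \sqrt{\log(1/\gamma)/k}\bigr]$. I do not foresee any serious obstacle. The only mildly unusual feature is the appearance of a fourth root on $R\sqrt{\log(d)/d}$ in the final bound: it arises from the composition of the square root introduced by Cauchy--Schwarz (which puts $\sqrt{\P(E_\ell)}$ into $\sigma^2$) with the square root appearing in Bernstein's variance term $\sqrt{\sigma^2}$. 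The remainder is careful bookkeeping of absolute constants to recover the stated factors of $2$ and $2R$.
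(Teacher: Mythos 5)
Your proposal is correct and follows essentially the same route as the paper's proof: the same Bernstein application to $X_\ell = a_\ell (w_\ell \cdot v)(\psi'(w_\ell \cdot x) - \psi'(w_\ell \cdot (x+\delta)))$, the same Cauchy--Schwarz decoupling of the Gaussian moment from the sign-flip probability of \eqref{eq:actchange1}, and the same resulting parameters $\sigma \asymp d^{-1/2}(R\sqrt{\log(d)/d})^{1/4}$ and $c \asymp d^{-1/2}$. Your explanation of where the fourth root comes from (Cauchy--Schwarz composed with Bernstein's $\sqrt{\sigma^2}$) is exactly the mechanism at work in the paper.
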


\begin{proof}
We apply Bernstein's inequality with $X_{\ell} = a_{\ell} (w_{\ell} \cdot v) (\psi'(w_{\ell} \cdot x) - \psi'(w_{\ell} \cdot (x+\delta)))$. We have $\E[X_{\ell}]=0$ and (using \eqref{eq:actchange1} in Lemma \ref{lem:actchange})
\begin{eqnarray*}
\E[|X_{\ell}|^q] & = & \E[|w_{\ell} \cdot v|^q |\psi'(w_{\ell} \cdot x) - \psi'(w_{\ell} \cdot (x+\delta))|^q] \\
& \leq & \sqrt{\E[|w_{\ell} \cdot v|^{2 q}] \times \P(\mathrm{sign}(w_{\ell} \cdot x) \neq \mathrm{sign}(w_{\ell} \cdot (x+\delta)))} \\
& \leq & \sqrt{\frac{(2q)!}{2 d^q}} \times \sqrt{2 R \sqrt{\frac{\log(d)}{d}}}  \\
& \leq & \frac{q!}{2} \left(\frac{2}{\sqrt{d}}\right)^q  \times \sqrt{2 R \sqrt{\frac{\log(d)}{d}}} \,.
\end{eqnarray*}
Thus we can apply Bernstein with $\sigma = \frac{2}{\sqrt{d}} \times \left(2 R \sqrt{\frac{\log(d)}{d}}\right)^{1/4}$ and $c = \frac{2}{\sqrt{d}}$ which yields the claimed bound.
\end{proof}

Finally, we give the equivalent of Lemma \ref{lem:hesssmooth2}:

\begin{lemma} \label{lem:hessrelu2}
Let $1 \leq R \leq \sqrt{d}/2$, $\sqrt{k} \geq 52$ and $d \geq \log(1/\gamma)$. Then, with probability at least $1-\gamma$, one has
\begin{align*}
& \sup_{v \in \mS^{d-1}, \delta \in \R^d : \delta \leq R} \frac{1}{\sqrt{k}} \sum_{\ell=1}^k a_{\ell} (w_{\ell} \cdot v) (\psi'(w_{\ell} \cdot x) - \psi'(w_{\ell} \cdot (x+\delta)) \\
& \leq 20 \left( R\log^2 (Rk)\sqrt{\frac{\log d}{d}}\right)^{1/4} + 40 \sqrt{\frac{d}{k}} \log(Rk)\,.
\end{align*}
\end{lemma}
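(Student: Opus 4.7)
The plan is an $\epsilon$-net argument over $\Omega = \{(v,\delta) : \|v\|=1,\ \|\delta\|\leq R\}$, using Lemma \ref{lem:hessrelu1} at the points of the net and Lemma \ref{lem:actchange} to handle the discontinuity of $\psi'$ away from the net. Specifically, fix $\epsilon = 1/k$ and let $N_\epsilon$ be a minimal $\epsilon$-net of $\Omega$, of size at most $(10Rk)^{2d}$. Applying Lemma \ref{lem:hessrelu1} with failure probability $\gamma/(3|N_\epsilon|)$ and a union bound replaces $\log(1/\gamma)$ in the pointwise estimate by $L := 2d\log(10Rk) + \log(3/\gamma)$, which under the hypothesis $d \geq \log(1/\gamma)$ is of order $d\log(Rk)$. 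The two terms of Lemma \ref{lem:hessrelu1} applied with this $L$ then reproduce (up to absolute constants) exactly the two terms in the target: $\sqrt{L/d}\,(2R\sqrt{\log d/d})^{1/4}$ gives the first term, and $L/\sqrt{dk}$ gives the second.

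The second step is to bound the fluctuation of $\Phi(v,\delta) := \frac{1}{\sqrt{k}}\sum_\ell a_\ell (w_\ell\cdot v)(\psi'(w_\ell\cdot x) - \psi'(w_\ell\cdot(x+\delta)))$ between a net point $(v,\delta)$ and any $(v',\delta') \in \Omega$ within distance $\epsilon$. This splits into a $v$-direction piece and a $\delta$-direction piece. For the $v$-direction, linearity of $\Phi$ in $v$ gives $|\Phi(v',\delta) - \Phi(v,\delta)| \leq \frac{\|v-v'\|}{\sqrt{k}}\sum_{\ell\in S^{(0)}_\delta}\|w_\ell\|$, where $S^{(0)}_\delta := \{\ell : \psi'(w_\ell\cdot x)\neq \psi'(w_\ell\cdot(x+\delta))\}$. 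A Chernoff bound applied to $|S^{(0)}_\delta|$, whose Bernoulli summands have mean at most $R\sqrt{2\log d/d} + 1/d$ by \eqref{eq:actchange1}, combined with a union bound over the $\delta$-slice of $N_\epsilon$ and with \eqref{eq:sumnormsquared}, makes this contribution easily negligible.

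The main step is the $\delta$-direction, where a Lipschitz argument is unavailable since $\psi'$ jumps. The tool is \eqref{eq:actchange2}: the indicator $Z_\ell(\delta) := \mathbf{1}\{\exists \delta' : \|\delta'-\delta\|\leq\epsilon,\ \mathrm{sign}(w_\ell\cdot(x+\delta))\neq \mathrm{sign}(w_\ell\cdot(x+\delta'))\}$ has mean at most $2\epsilon(1 + 2\sqrt{\log(2/\epsilon)/d})$. Chernoff plus a union bound over the $\delta$-slice of $N_\epsilon$ yields $|S_\delta|:=\sum_\ell Z_\ell(\delta) \lesssim d\log(Rk)$ uniformly. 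Since $\psi'(w_\ell\cdot(x+\delta)) = \psi'(w_\ell\cdot(x+\delta'))$ off $S_\delta$, Cauchy--Schwarz gives $|\Phi(v,\delta)-\Phi(v,\delta')| \leq \frac{\sqrt{|S_\delta|}}{\sqrt{k}}\sqrt{\sum_\ell(w_\ell\cdot v)^2}$, and the Wishart operator-norm bound $\sup_{\|v\|=1}\sum_\ell(w_\ell\cdot v)^2 \lesssim k/d$ (standard for $k \geq d$) produces $|\Phi(v,\delta)-\Phi(v,\delta')| \lesssim \sqrt{|S_\delta|/d}$. Under the hypotheses ($R\leq \sqrt{d}/2$, $\sqrt{k}\geq 52$, $k \geq d$), elementary algebra shows this microscopic increment is dominated by the sum of the two target terms, finishing the proof.

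The principal obstacle is exactly this $\delta$-direction interpolation: the discontinuity of $\psi'$ forces one to replace the usual modulus-of-continuity estimate by the microscopic sparsity count \eqref{eq:actchange2}, and then to pay for the random activation-change set via a Wishart operator-norm estimate in order to turn $\sqrt{|S_\delta|}$ into the scale-appropriate $\sqrt{|S_\delta|/d}$. The remaining pieces (pointwise concentration on the net, the linear $v$-direction, and assembling the constants) are routine.
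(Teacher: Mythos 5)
Your overall architecture (net over $\Omega$, Lemma \ref{lem:hessrelu1} plus a union bound on the net, \eqref{eq:actchange2} to count activation flips at scale $\epsilon$, and a separate treatment of the $v$- and $\delta$-increments) matches the paper, and the net/pointwise part and the $v$-direction part are fine. The genuine gap is in your $\delta$-direction increment. You bound
\[
|\Phi(v,\delta)-\Phi(v,\delta')| \;\leq\; \frac{\sqrt{|S_\delta|}}{\sqrt{k}}\Bigl(\sup_{\|v\|=1}\sum_{\ell=1}^k (w_\ell\cdot v)^2\Bigr)^{1/2} \;\lesssim\; \sqrt{\tfrac{|S_\delta|}{d}},
\]
using the operator norm of the \emph{full} Wishart matrix. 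With your count $|S_\delta|\lesssim d\log(Rk)$ (or even the sharper $|S_\delta|\leq 4d$ obtainable with a smaller $\epsilon$), this gives an increment of order $\sqrt{\log(Rk)}$ (resp.\ a constant), which is \emph{not} dominated by the claimed bound: for fixed $R$, $k\gg d\log^2 k$ and $d/\log d\to\infty$, both target terms $20(R\log^2(Rk)\sqrt{\log d/d})^{1/4}$ and $40\sqrt{d/k}\log(Rk)$ tend to $0$. (Your parenthetical extra hypothesis $k\geq d$ is also not part of the lemma, though that is secondary.) The Cauchy--Schwarz step discards both the sparsity of $S_\delta$ inside the quadratic form and the random signs $a_\ell$, and each loss is fatal at this scale.

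The fix, which is what the paper does, is to keep the sum $\frac{1}{\sqrt{k}}\bigl\|\sum_{\ell\in S}a_\ell w_\ell\bigr\|$ intact over the (random) flip set and exploit the cancellation from the $a_\ell$: conditionally, $\frac{1}{\sqrt{k}}\sum_{\ell\in S}a_\ell w_\ell$ is a Gaussian vector of typical norm $\sqrt{|S|/k}$, and a union bound over all $\binom{k}{\leq 4d}\leq k^{4d}$ candidate subsets (needed because $S_\delta$ depends on the $w_\ell$) costs only a $\sqrt{\log k}$ factor, yielding the bound \eqref{eq:small_subset_sum} of order $\sqrt{d/k}\sqrt{\log k}$ --- precisely the second target term. (Even without the signs, one would at least have to use the operator norm of the $|S|\times d$ \emph{submatrix} $W_S$, again with a union bound over subsets, rather than the full $k\times d$ matrix.) To make $s=4d$ suffice in the counting step one should also take $\epsilon$ polynomially small in $k$ (the paper uses $\epsilon=R^{-4/3}k^{-4}$) rather than $\epsilon=1/k$; with $\epsilon=1/k$ the expected number of flips per $\delta$ is already of order $1$, forcing $s\asymp d\log(Rk)$ and costing you a further $\sqrt{\log(Rk)}$.
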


\begin{proof}
Similarly to the proof of Lemma \ref{lem:hesssmooth2}, we define $\Phi(v,\delta) = \frac{1}{\sqrt{k}} \sum_{\ell=1}^k a_{\ell} (w_{\ell} \cdot v) (\psi'(w_{\ell} \cdot x) - \psi'(w_{\ell} \cdot (x+\delta)))$, and $N_\epsilon$ an $\epsilon$-net for $\Omega = \{(v,\delta), \|v\|=1, \|\delta\| \leq R\}$ (recall that $|N_\epsilon| \leq (10R/\epsilon)^{2 d}$). Using Lemma \ref{lem:hessrelu1}, we obtain with probability at least $1-\gamma$:
\begin{align}
& \sup_{(v,\delta) \in \Omega} \Phi(v,\delta) \leq \sup_{(v,\delta) \in N} \Phi(v,\delta) +  \sup_{(v,\delta), (v',\delta') \in \Omega : \|v-v'\| + \|\delta - \delta'\| \leq \epsilon} |\Phi(v,\delta) - \Phi(v',\delta')| \notag \\
& \leq 2 \sqrt{\frac{2 d \log(10R/\epsilon) + \log(1/\gamma)}{d}} \left(\left(2 R \sqrt{\frac{\log(d)}{d}}\right)^{1/4} + \sqrt{\frac{2 d \log(10R/\epsilon) + \log(1/\gamma)}{k}} \right) \notag \\
& \,\, + \sup_{(v,\delta), (v',\delta') \in \Omega : \|v-v'\| + \|\delta - \delta'\| \leq \epsilon} |\Phi(v,\delta) - \Phi(v',\delta')| \,. \label{eq:netrelu}
\end{align}
Thus, it remains again to estimate the ``Lipschitz constant" of the mapping $\Phi$ but crucially only {\em at scale $\epsilon$} (the crucial point is that we don't need to argue about infinitesimal scale, where a ReLU network is {\em not} smooth). For $v, v'$, one has
\[
|\Phi(\delta, v) - \Phi(\delta,v')| \leq \frac{\|v - v'\|}{\sqrt{k}} \sum_{\ell=1}^k \|w_{\ell}\|  \,.
\]
Using \eqref{eq:chisquared}, we see that with probability at least $1-\gamma$, one has for all $\ell \in [k]$,
\[
\|w_{\ell}\|^2 \leq 1 + 4 \sqrt{\frac{\log(k/\gamma)}{d}} \,,
\]
so that in this event we have:
\begin{equation}
|\Phi(\delta, v) - \Phi(\delta,v')| \leq \|v-v'\| \sqrt{k + 4 k \sqrt{\frac{\log(k/\gamma)}{d}}} \,. \label{eq:last0}
\end{equation}
On the other hand, for $\delta, \delta'$ we write:
\begin{align*}
    |\Phi(\delta,v) - \Phi(\delta',v)| &\leq \frac{1}{\sqrt{k}}\left|\sum_{\ell = 1}^k \ds1 \left\{\mathrm{sign} (w_{\ell} \cdot (x + \delta)) > \mathrm{sign} (w_{\ell} \cdot (x + \delta'))\right\} a_\ell w_\ell \cdot v \right| \\
    & + \frac{1}{\sqrt{k}} \left| \sum_{\ell = 1}^k \ds1 \left\{\mathrm{sign} (w_{\ell} \cdot (x + \delta)) < \mathrm{sign} (w_{\ell} \cdot (x + \delta'))\right\} a_\ell w_\ell \cdot v \right| \\
    &\leq \frac{1}{\sqrt{k}} \left\|\sum_{\ell = 1}^k \ds1 \left\{\mathrm{sign} (w_{\ell} \cdot (x + \delta)) > \mathrm{sign} (w_{\ell} \cdot (x + \delta'))\right\} a_\ell w_\ell \right\| \\
    &+ \frac{1}{\sqrt{k}} \left\|\sum_{\ell = 1}^k \ds1 \left\{\mathrm{sign} (w_{\ell} \cdot (x + \delta)) < \mathrm{sign} (w_{\ell} \cdot (x + \delta'))\right\} a_\ell w_\ell \right\| \refstepcounter{equation}\tag{\theequation} \label{eq:microscale_grad_dev}
\end{align*}
Letting $X_{\ell} (\delta) = \ds1\{ \exists \delta' : \|\delta - \delta'\| \leq \epsilon \text{ and } \mathrm{sign}(w_{\ell} \cdot (x+\delta)) \neq \mathrm{sign}(w_{\ell} \cdot (x+\delta')) \}$, we now control with exponentially high probability $\sum_{\ell = 1}^k X_\ell (\delta)$. By \eqref{eq:actchange2} in Lemma \ref{lem:actchange}, we know that $X_{\ell} (\delta)$ is a Bernoulli of parameter at most $2 \epsilon \left(1 + 2 \sqrt{\frac{\log(2/\epsilon)}{d}}\right)$. So we have:
\[
\P\left(\sum_{\ell=1}^k X_{\ell} (\delta) \geq s\right) \leq \left(2 k \epsilon \left(1 + 2 \sqrt{\frac{\log(2/\epsilon)}{d}}\right) \right)^s \,.
\]
And thus, thanks to an union bound, we obtain:
\begin{equation}
\P\left( \exists (v,\delta) \in N_\epsilon: \sum_{\ell=1}^k X_\ell (\delta) \geq s \right) \leq \left(\frac{10R}{\epsilon}\right)^{2 d} \left(2 k \epsilon \left(1 + 2 \sqrt{\frac{\log(2/\epsilon)}{d}}\right)\right)^s. \label{eq:last3}
\end{equation}
With $s= 4d$ the latter is upper bounded by $ (26k\sqrt{R}\epsilon^{3d/8})^{4 d}$ (using the fact that $\sqrt{\epsilon}(1+2 \sqrt{\log 2/\epsilon}) \leq 4 \epsilon^{3/8}\,,\,\forall 1\geq \epsilon >0$). Taking $\epsilon= R^{-4/3}k^{-4}$ we get that this probability is less than $(26/\sqrt{k})^{8 d} \leq \gamma$ for $\sqrt{k}\geq 52$ and $d \geq \log(1/\gamma)$.

Furthermore, we have by another union bound and the concentration of Lipschitz functions of Gaussians \cite[Theorem 5.5]{BLM} ($\|\cdot\|$ is a $1$-Lipschitz function):
\begin{equation*}
    \P \left(\exists S \subset [k],\, |S| \leq 4d: \left\|\frac{1}{\sqrt{k}} \sum_{i \in S} a_i w_i\right\| \geq \sqrt{\frac{|S|}{k}} (1 + t)\right) \leq k^{4d} e^{- \frac{dt^2}{2}}
\end{equation*}
By setting $t = 2 \sqrt{\log 4k + \frac{\log 8/\gamma}{d}}$, we get that with probability at least $1 - \gamma / 8$:
\begin{equation}
    \label{eq:small_subset_sum}
    \forall S \subset [k],\, |S| \leq 4d: \left\|\frac{1}{\sqrt{k}} \sum_{i \in S} a_i w_i\right\| \leq 9 \sqrt{\frac{d}{k}}\sqrt{\log 4k + \frac{\log 8/\gamma}{d}}
\end{equation}
Finally, noting that for all $(v,\delta) \in N, \|\delta' - \delta\| \leq \epsilon$:
\begin{gather*}
    \ds1 \left\{\mathrm{sign} (w_{\ell} \cdot (x + \delta)) < \mathrm{sign} (w_{\ell} \cdot (x + \delta'))\right\} \leq X_\ell (\delta)\\ 
    \ds1 \left\{\mathrm{sign} (w_{\ell} \cdot (x + \delta)) > \mathrm{sign} (w_{\ell} \cdot (x + \delta'))\right\} \leq X_\ell (\delta),
\end{gather*}
we may combine \eqref{eq:last0}, \eqref{eq:microscale_grad_dev}, \eqref{eq:last3} and \eqref{eq:small_subset_sum} to obtain that with probability at least $1-\gamma$, we have for all $\delta, v, \delta', v'$ with $\|\delta - \delta'\| \leq \frac{1}{R^{4/3} k^4}$ and $\|v - v'\| \leq \frac{1}{R^{4/3}k^4}$,
\[
|\Phi(\delta, v) - \Phi(\delta,v')| \leq \frac{1}{R^{4/3}k^4} \sqrt{k + 4 k \sqrt{\frac{\log(4k/\gamma)}{d}}} \,.
\]
and
\[
|\Phi(\delta, v) - \Phi(\delta',v)| \leq 18 \sqrt{\frac{d}{k}} \sqrt{\log 4k + \frac{\log 8/\gamma}{d}}\,.
\]
Combining this with \eqref{eq:netrelu} we obtain with probability at least $1-\gamma$:
\begin{align*}
& \sup_{(v,\delta) \in \Omega} \Phi(v,\delta) \\
& \leq 2 \sqrt{\frac{10 d \log(Rk) + \log(2/\gamma)}{d}} \left(\left(2 R \sqrt{\frac{\log(d)}{d}}\right)^{1/4} + \sqrt{\frac{10 d \log(Rk) + \log(2/\gamma)}{k}} \right) \\
& \,\, + 20 \sqrt{\frac{d}{k}} \sqrt{\log 4k + \frac{\log 8/\gamma}{d}} \\
&\leq 3 \sqrt{\frac{10 d \log(Rk) + \log(2/\gamma)}{d}} \left(\left(2 R \sqrt{\frac{\log(d)}{d}}\right)^{1/4} + \sqrt{\frac{10 d \log(Rk) + \log(2/\gamma)}{k}} \right)\,,
\end{align*}
which concludes the proof up to straightforward algebraic manipulations.
\end{proof}

\begin{proof}[of Theorem \ref{thm:relu}]
The proof is the same as for Theorem \ref{thm:smooth} with Lemma \ref{lem:fixedrelu} instead of Lemma \ref{lem:fixedsmooth} and Lemma \ref{lem:hessrelu2} instead of Lemma \ref{lem:hesssmooth2}.
\end{proof}

\section{Experiments} \label{sec:exp}
\paragraph{Setting} In order to verify our theoretical findings, we ran some experiments to measure empirically the values of $\|\nabla f(x)\|$ and the probability of finding an adversarial example in that direction. More precisely, we take a random point $x$ of norm $\sqrt{d}$ and initialize a network using the procedure described in Section~\ref{sec:intro}. We then find the smallest $\eta$ such that a gradient step $\eta \nabla f(x)$ changes the sign of the function. $\eta$ is of the opposite sign of $f(x)$ and we limit our search to $|\eta| < 20$. We explore various values of $d$ and $k$. We also consider deeper networks with $L = 1$ through $L = 6$ hidden layers. All the hidden layers are of width $k$.

\paragraph{Results} Figure~\ref{fig:length1} shows the average of the smallest $\eta$ required to switch the sign of the function. We note that the average only includes cases where an $\eta$ was indeed found. Figure~\ref{fig:grad1} shows the gradient norm in $x$ (all cases included). As we see, both the smallest $\eta$ and the gradient norm are approximately constant both in $d$ and in $k$. This finding also holds for deeper networks (see Appendix~\ref{app:more_results}).
In Figure~\ref{fig:switched}, we show the fraction of examples (out of $10,000$ samples) whose sign is switched for $|\eta| < 20$. We see that with $L = 1$ and values of $d$ and $k$ larger than 50, $100\%$ of samples are switched. This confirms our theoretical results. Additionally, we also observe that for deeper networks, the same statement holds. The values of $d$ and $k$ at which $100\%$ switching is reached appears to grow with $L$\footnote{Due to GPU memory limitations, $k$ could not reach 1,000,000 for deeper networks.}.

\begin{figure}
\begin{subfigure}{.5\textwidth}
    \centering
    \includegraphics[width=0.99\linewidth]{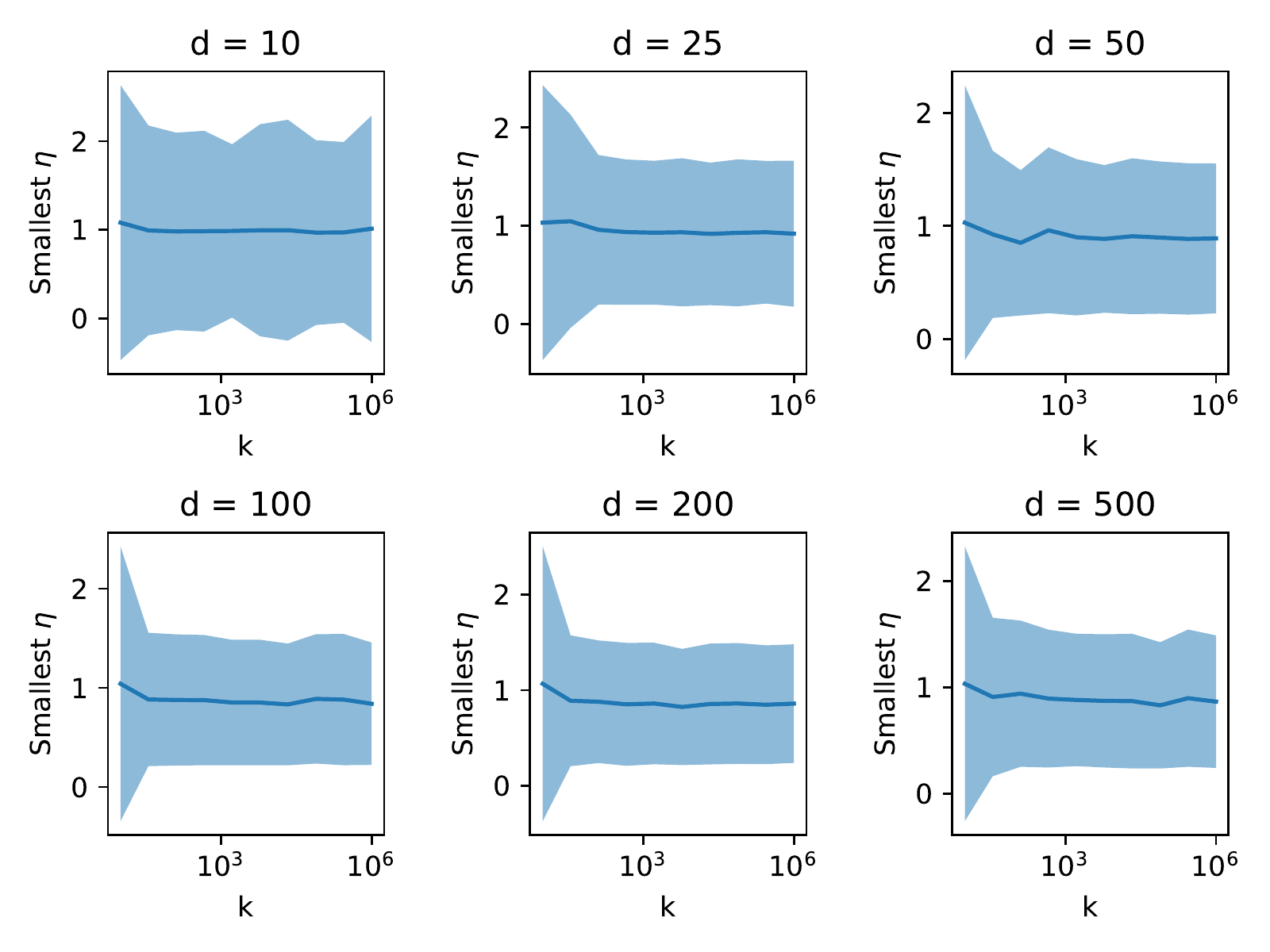}
    \caption{Smallest step size $\eta$}
    \label{fig:length1}
\end{subfigure}
\begin{subfigure}{.5\textwidth}
    \centering
    \includegraphics[width=0.99\linewidth]{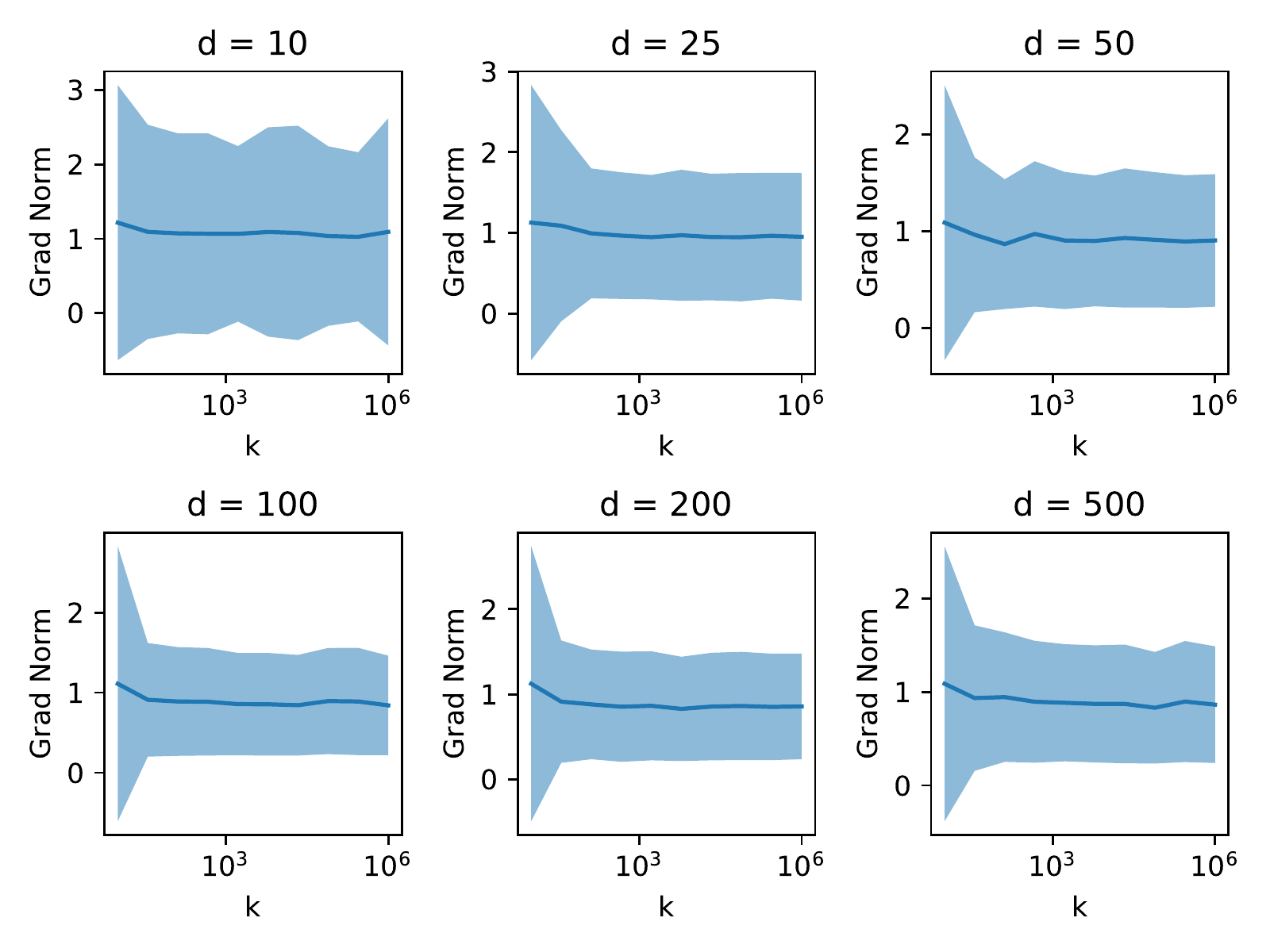}
    \caption{Norm of the gradient}
    \label{fig:grad1}
\end{subfigure}
\caption{Smallest step-size $\eta$ switching the prediction (\textbf{left}) and average gradient norm $\|\nabla f(x)\|$ (\textbf{right}) for L = 1. Averages over $100$ network initializations and $100$ values of $x$ per initialization. The colored area represents one standard deviation.}
\end{figure}

\begin{figure}
\begin{subfigure}{.32\textwidth}
    \centering
    \includegraphics[width=0.95\linewidth]{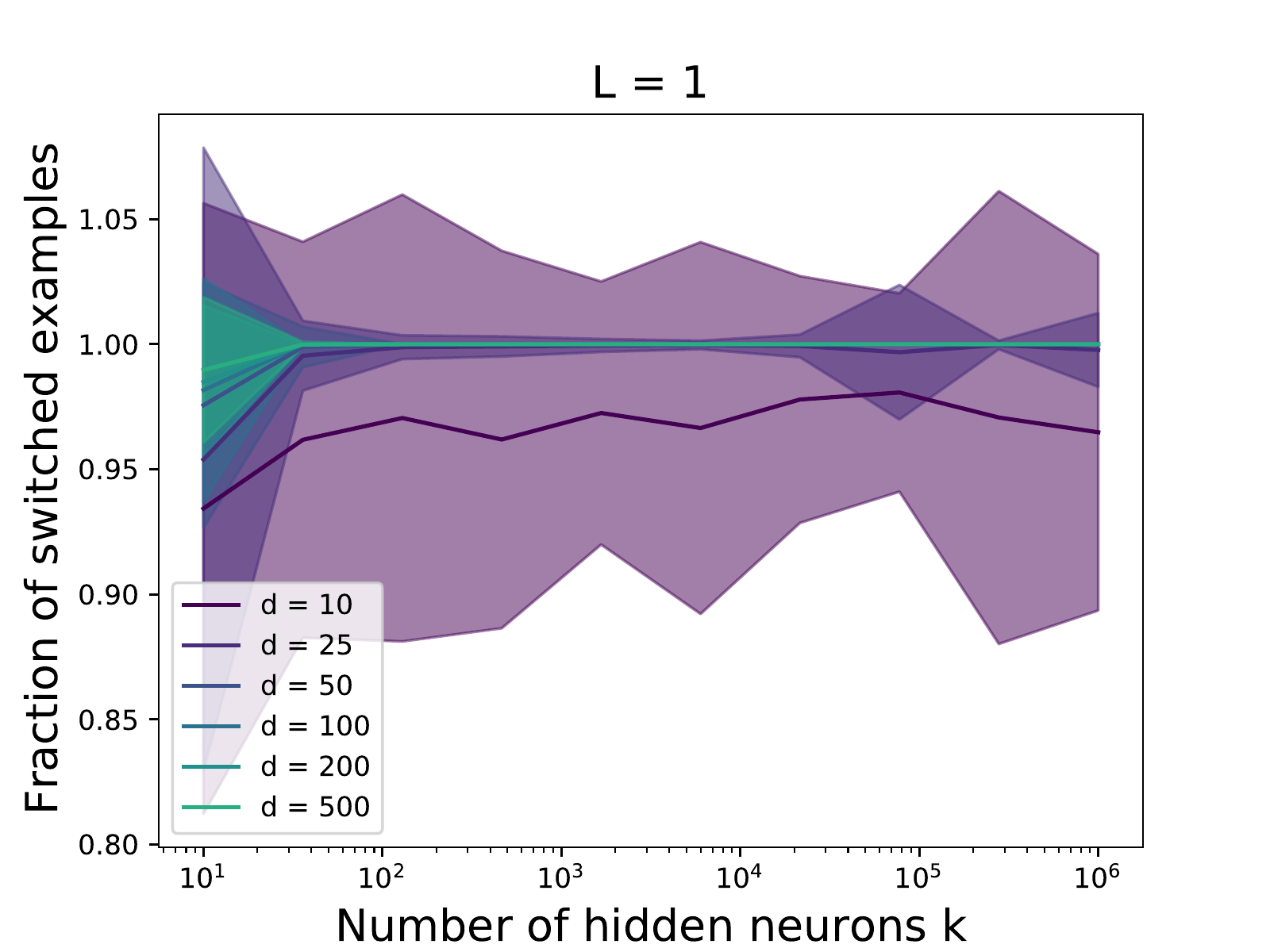}
    \label{fig:switched1}
\end{subfigure}
\begin{subfigure}{.32\textwidth}
    \centering
    \includegraphics[width=0.95\linewidth]{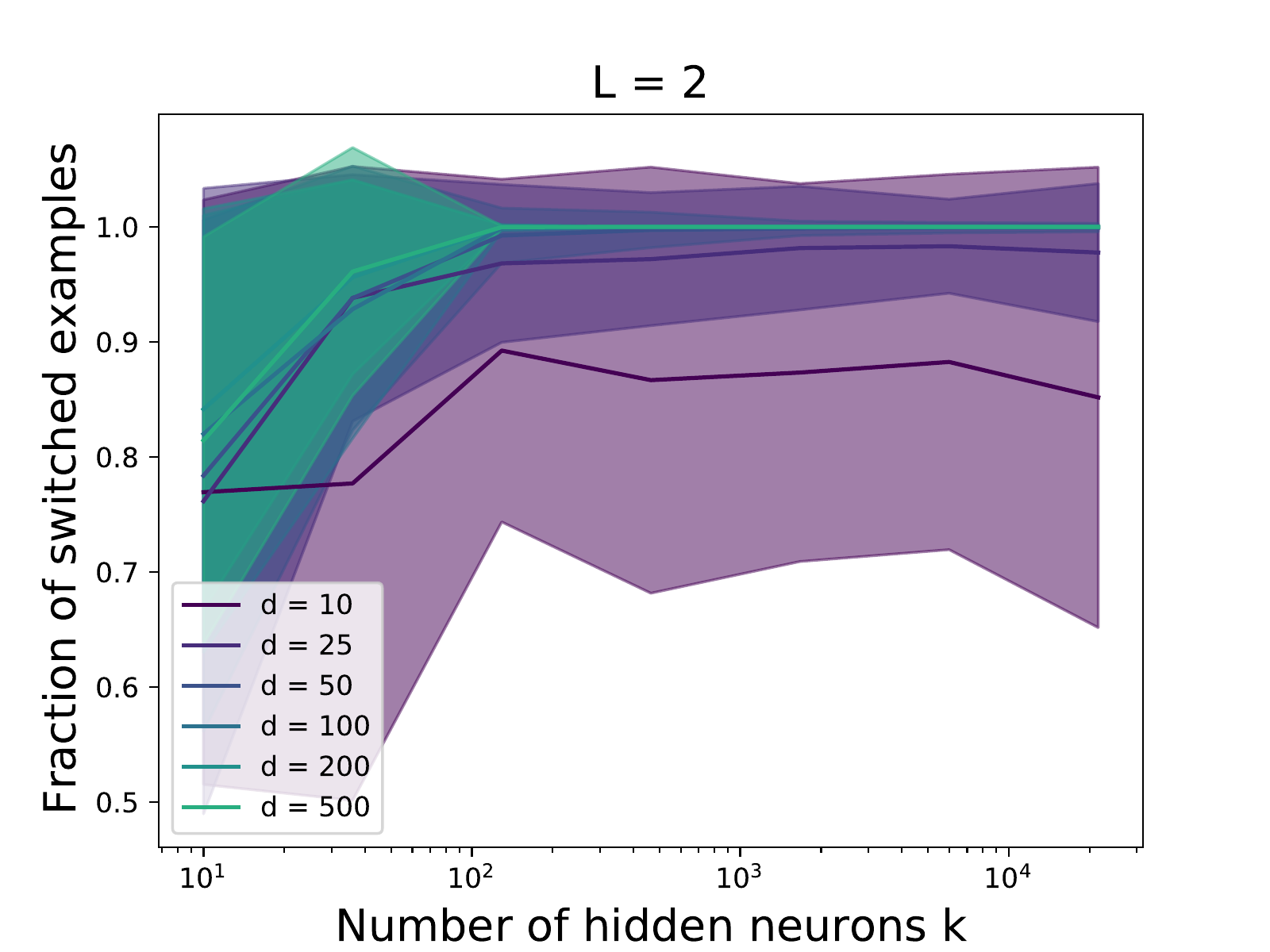}
    \label{fig:switched2}
\end{subfigure}
\begin{subfigure}{.32\textwidth}
    \centering
    \includegraphics[width=0.95\linewidth]{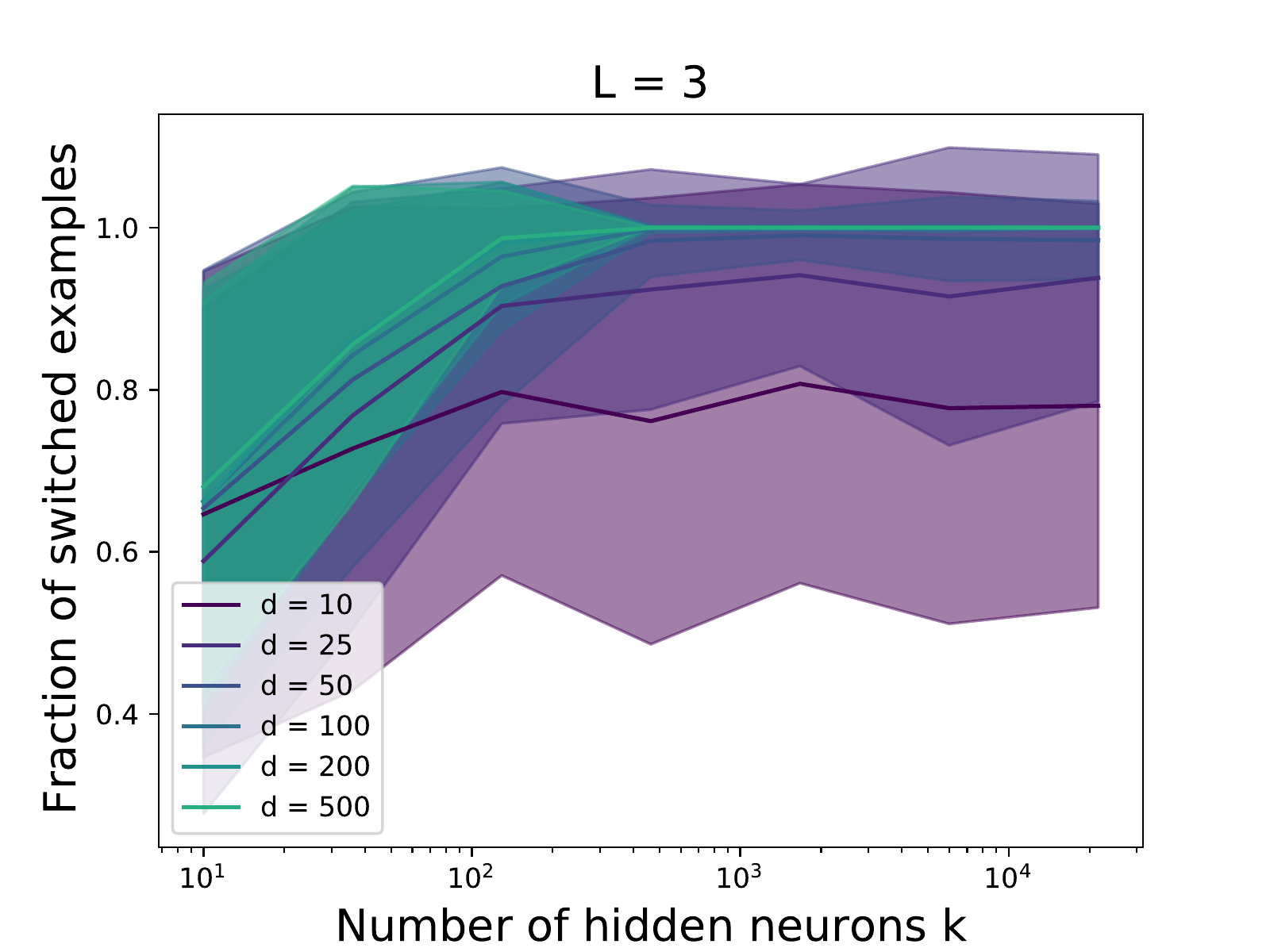}
    \label{fig:switched3}
\end{subfigure}
\begin{subfigure}{.32\textwidth}
    \centering
    \includegraphics[width=0.95\linewidth]{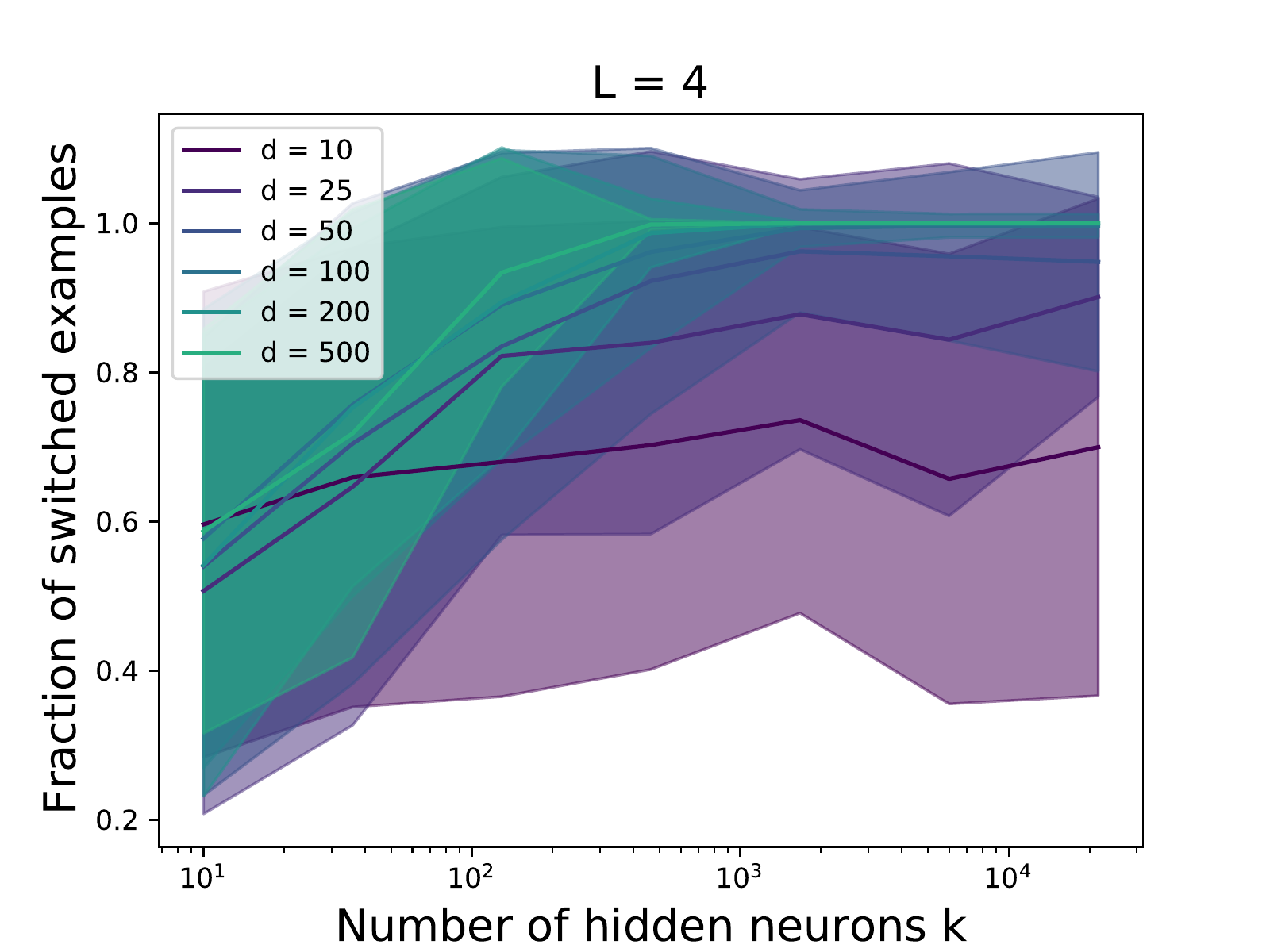}
    \label{fig:switched4}
\end{subfigure}
\begin{subfigure}{.32\textwidth}
    \centering
    \includegraphics[width=0.95\linewidth]{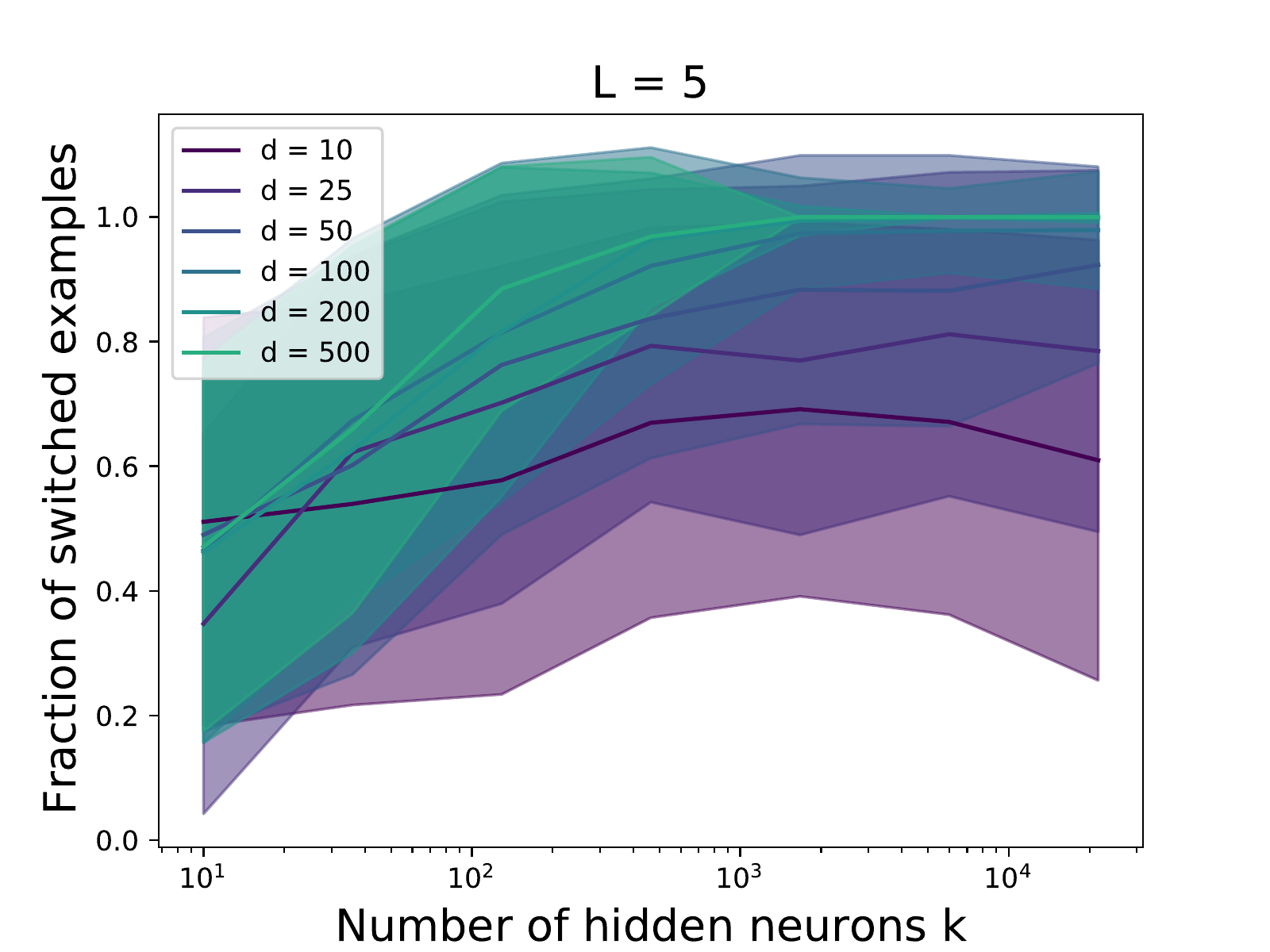}
    \label{fig:switched5}
\end{subfigure}
\begin{subfigure}{.32\textwidth}
    \centering
    \includegraphics[width=0.95\linewidth]{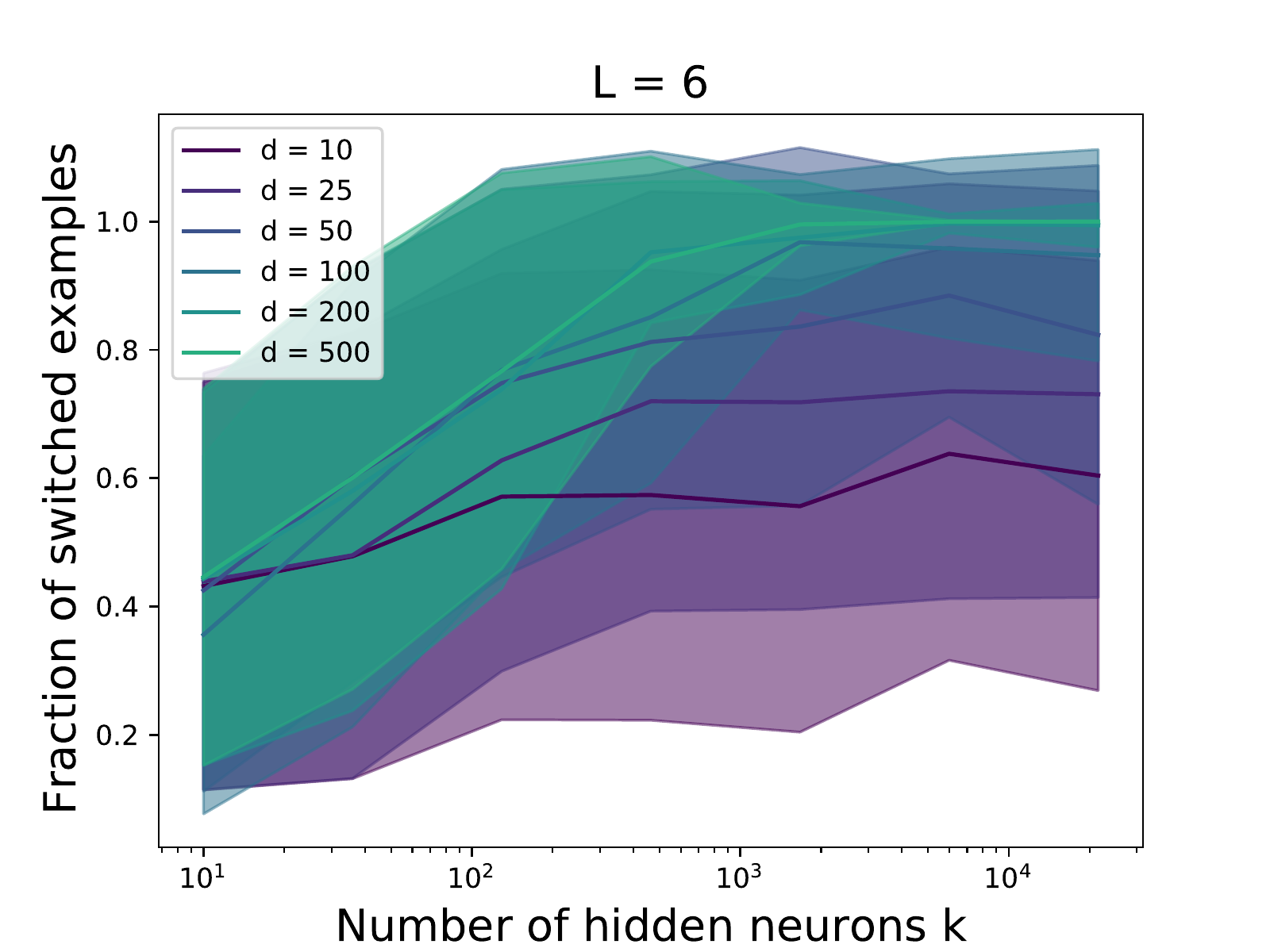}
    \label{fig:switched5}
\end{subfigure}
\caption{Fraction of inputs with an adversarial example found with $\eta < 20$.}
\label{fig:switched}
\end{figure}

\subsubsection*{Acknowledgment}
We thank Mark Sellke for pointing out to us the reference \cite{arous2020geometry}, and Peter Bartlett for several discussions on this problem.

\bibliographystyle{plainnat}
\bibliography{neuralbib}

\begin{thebibliography}{15}
\providecommand{\natexlab}[1]{#1}
\providecommand{\url}[1]{\texttt{#1}}
\expandafter\ifx\csname urlstyle\endcsname\relax
  \providecommand{\doi}[1]{doi: #1}\else
  \providecommand{\doi}{doi: \begingroup \urlstyle{rm}\Url}\fi

\bibitem[Allen-Zhu and Li(2020)]{allen2020feature}
Zeyuan Allen-Zhu and Yuanzhi Li.
\newblock Feature purification: How adversarial training performs robust deep
  learning.
\newblock \emph{arXiv preprint arXiv:2005.10190}, 2020.

\bibitem[Athalye et~al.(2018)Athalye, Carlini, and
  Wagner]{athalye2018obfuscated}
Anish Athalye, Nicholas Carlini, and David Wagner.
\newblock Obfuscated gradients give a false sense of security: Circumventing
  defenses to adversarial examples.
\newblock In \emph{International Conference on Machine Learning}, 2018.

\bibitem[Ben~Arous et~al.(2020)Ben~Arous, Subag, and
  Zeitouni]{arous2020geometry}
G{\'e}rard Ben~Arous, Eliran Subag, and Ofer Zeitouni.
\newblock Geometry and temperature chaos in mixed spherical spin glasses at low
  temperature: the perturbative regime.
\newblock \emph{Communications on Pure and Applied Mathematics}, 73\penalty0
  (8):\penalty0 1732--1828, 2020.

\bibitem[Boucheron et~al.(2013)Boucheron, Lugosi, and Massart]{BLM}
St{\'e}phane Boucheron, G{\'a}bor Lugosi, and Pascal Massart.
\newblock \emph{Concentration inequalities: A nonasymptotic theory of
  independence}.
\newblock Oxford university press, 2013.

\bibitem[Daniely and Schacham(2020)]{DanielySchacham}
Amit Daniely and Hadas Schacham.
\newblock Most relu networks suffer from $\ell^2$ adversarial perturbations.
\newblock \emph{arXiv preprint arXiv:2010.14927}, 2020.

\bibitem[Eldan et~al.(2021)Eldan, Mikulincer, and Schramm]{EMS21}
Ronen Eldan, Dan Mikulincer, and Tselil Schramm.
\newblock Non-asymptotic approximations of neural networks by gaussian
  processes.
\newblock \emph{arXiv preprint arXiv:2102.08668}, 2021.

\bibitem[Goodfellow et~al.(2015)Goodfellow, Shlens, and Szegedy]{Goodfellow15}
Ian Goodfellow, Jonathon Shlens, and Christian Szegedy.
\newblock Explaining and harnessing adversarial examples.
\newblock In \emph{International Conference on Learning Representations}, 2015.

\bibitem[Madry et~al.(2018)Madry, Makelov, Schmidt, Tsipras, and
  Vladu]{Madry18}
Aleksander Madry, Aleksandar Makelov, Ludwig Schmidt, Dimitris Tsipras, and
  Adrian Vladu.
\newblock Towards deep learning models resistant to adversarial attacks.
\newblock In \emph{International Conference on Learning Representations}, 2018.

\bibitem[Moosavi-Dezfooli et~al.(2017)Moosavi-Dezfooli, Fawzi, Fawzi, and
  Frossard]{moosavi2017universal}
Seyed-Mohsen Moosavi-Dezfooli, Alhussein Fawzi, Omar Fawzi, and Pascal
  Frossard.
\newblock Universal adversarial perturbations.
\newblock In \emph{Proceedings of the IEEE conference on computer vision and
  pattern recognition}, pages 1765--1773, 2017.

\bibitem[{Moosavi-Dezfooli} et~al.(2019){Moosavi-Dezfooli}, {Fawzi}, {Uesato},
  and {Frossard}]{8953595}
Seyed-Mohsen {Moosavi-Dezfooli}, Alhussein {Fawzi}, Jonathan {Uesato}, and
  Pascal {Frossard}.
\newblock Robustness via curvature regularization, and vice versa.
\newblock In \emph{2019 IEEE/CVF Conference on Computer Vision and Pattern
  Recognition (CVPR)}, 2019.

\bibitem[Papernot et~al.(2017)Papernot, McDaniel, Goodfellow, Jha, Celik, and
  Swami]{papernot2017practical}
Nicolas Papernot, Patrick McDaniel, Ian Goodfellow, Somesh Jha, Z~Berkay Celik,
  and Ananthram Swami.
\newblock Practical black-box attacks against machine learning.
\newblock In \emph{Proceedings of the 2017 ACM on Asia conference on computer
  and communications security}, 2017.

\bibitem[Qin et~al.(2019)Qin, Martens, Gowal, Krishnan, Dvijotham, Fawzi, De,
  Stanforth, and Kohli]{NEURIPS2019_0defd533}
Chongli Qin, James Martens, Sven Gowal, Dilip Krishnan, Krishnamurthy
  Dvijotham, Alhussein Fawzi, Soham De, Robert Stanforth, and Pushmeet Kohli.
\newblock Adversarial robustness through local linearization.
\newblock In \emph{Advances in Neural Information Processing Systems}, 2019.

\bibitem[Shamir et~al.(2019)Shamir, Safran, Ronen, and Dunkelman]{ShamirFather}
Adi Shamir, Itay Safran, Eyal Ronen, and Orr Dunkelman.
\newblock A simple explanation for the existence of adversarial examples with
  small hamming distance.
\newblock \emph{arXiv preprint arXiv:1901.10861}, 2019.

\bibitem[Szegedy et~al.(2014)Szegedy, Zaremba, Sutskever, Bruna, Erhan,
  Goodfellow, and Fergus]{G14}
Christian Szegedy, Wojciech Zaremba, Ilya Sutskever, Joan Bruna, Dumitru Erhan,
  Ian Goodfellow, and Rob Fergus.
\newblock Intriguing properties of neural networks.
\newblock In \emph{International Conference on Learning Representations}, 2014.

\bibitem[Wainwright(2019)]{Wainwright}
Martin~J Wainwright.
\newblock \emph{High-dimensional statistics: A non-asymptotic viewpoint},
  volume~48.
\newblock Cambridge University Press, 2019.

\end{thebibliography}

\newpage
\appendix
\section{Appendix}
\label{app:more_results}

For the sake of completeness, we report in Fig.~\ref{fig:app:L=2}-\ref{fig:app:L=6} the smallest $\eta$ to switch the sign of the prediction and the gradient norm at $x$ for depths $L \in \{2,\ldots,6\}$. In all our plots, the results are Averaged over $100$ network initializations and $100$ values of $x$ per initialization and the colored area represents one standard deviation 

\begin{figure}[h]
\begin{subfigure}{.5\textwidth}
    \centering
    \includegraphics[width=0.9\linewidth]{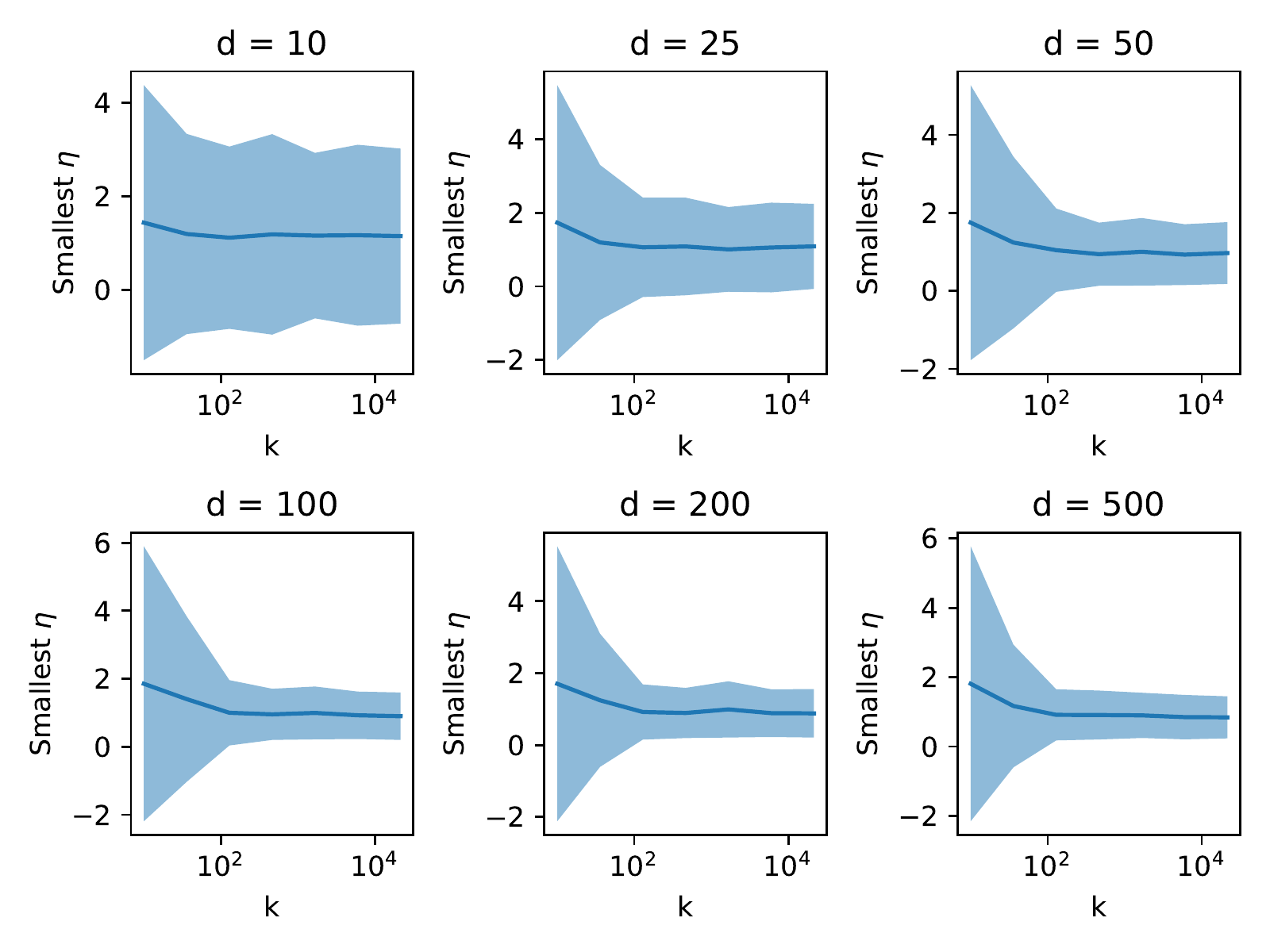}
    \caption{Smallest step size $\eta$}
    \label{fig:length2}
\end{subfigure}
\begin{subfigure}{.5\textwidth}
    \centering
    \includegraphics[width=0.9\linewidth]{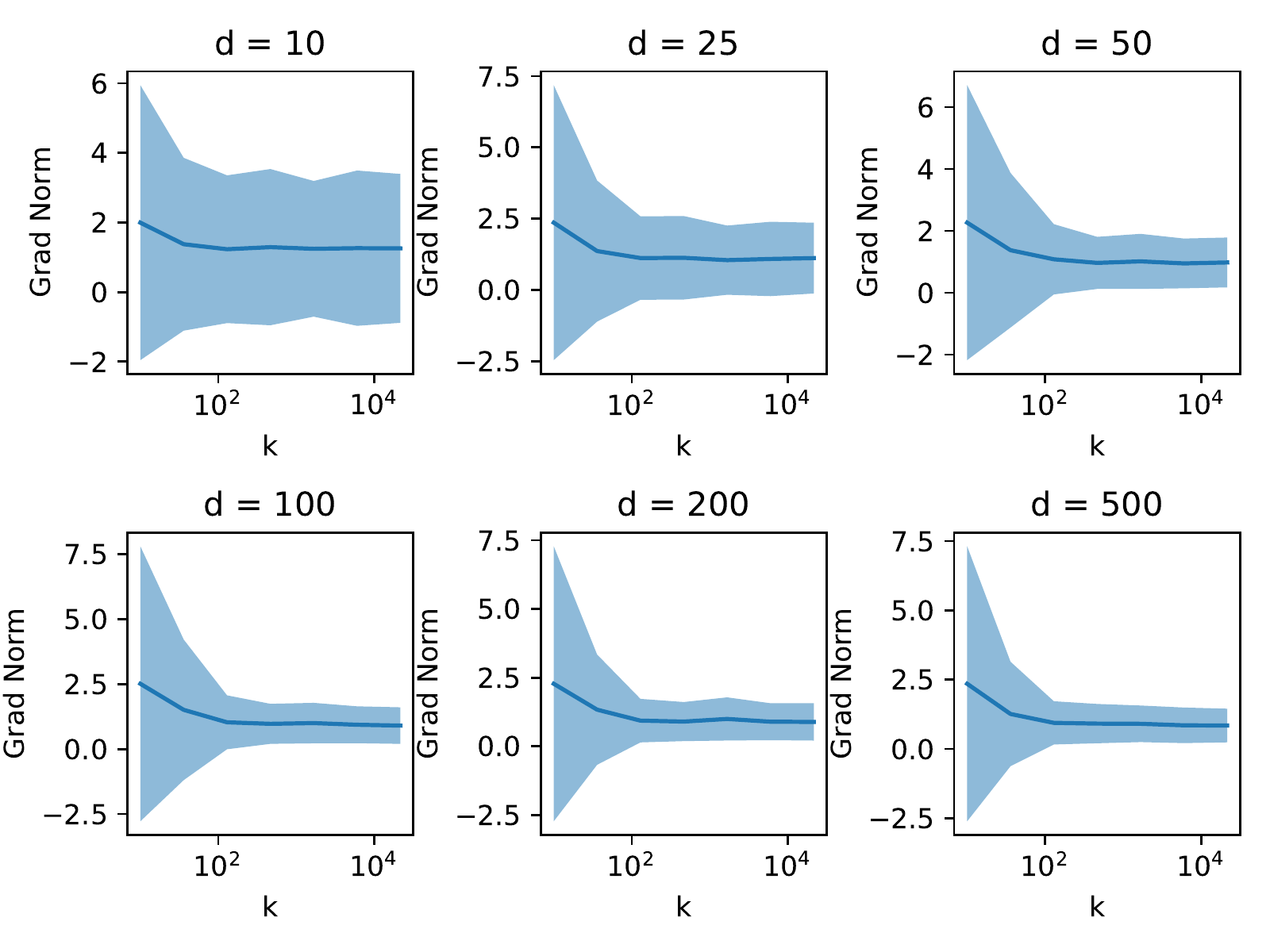}
    \caption{Norm of the gradient}
    \label{fig:grad2}
\end{subfigure}
\caption{Smallest $\eta$ switching the prediction and average gradient norm $\|\nabla f(x)\|$ for L = 2.}
\label{fig:app:L=2}
\end{figure}

\begin{figure}[h]
\begin{subfigure}{.5\textwidth}
    \centering
    \includegraphics[width=0.9\linewidth]{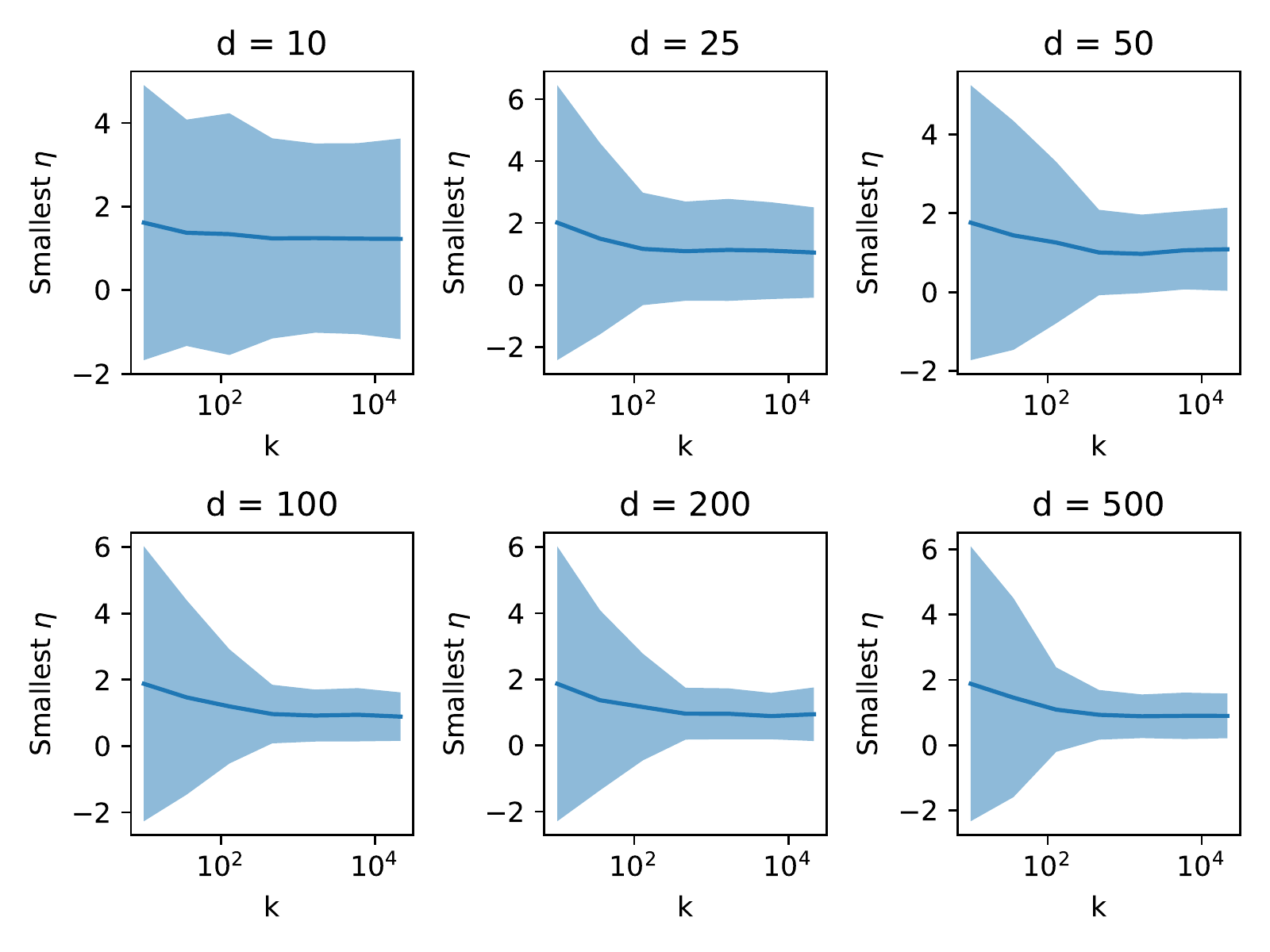}
    \caption{Smallest step size $\eta$}
    \label{fig:length3}
\end{subfigure}
\begin{subfigure}{.5\textwidth}
    \centering
    \includegraphics[width=0.9\linewidth]{figures/grad_1.pdf}
    \caption{Norm of the gradient}
    \label{fig:grad3}
\end{subfigure}
\caption{Smallest $\eta$ switching the prediction and average gradient norm $\|\nabla f(x)\|$ for L = 3.}
\end{figure}
\vspace{-1cm}
\begin{figure}[h]
\begin{subfigure}{.5\textwidth}
    \centering
    \includegraphics[width=0.9\linewidth]{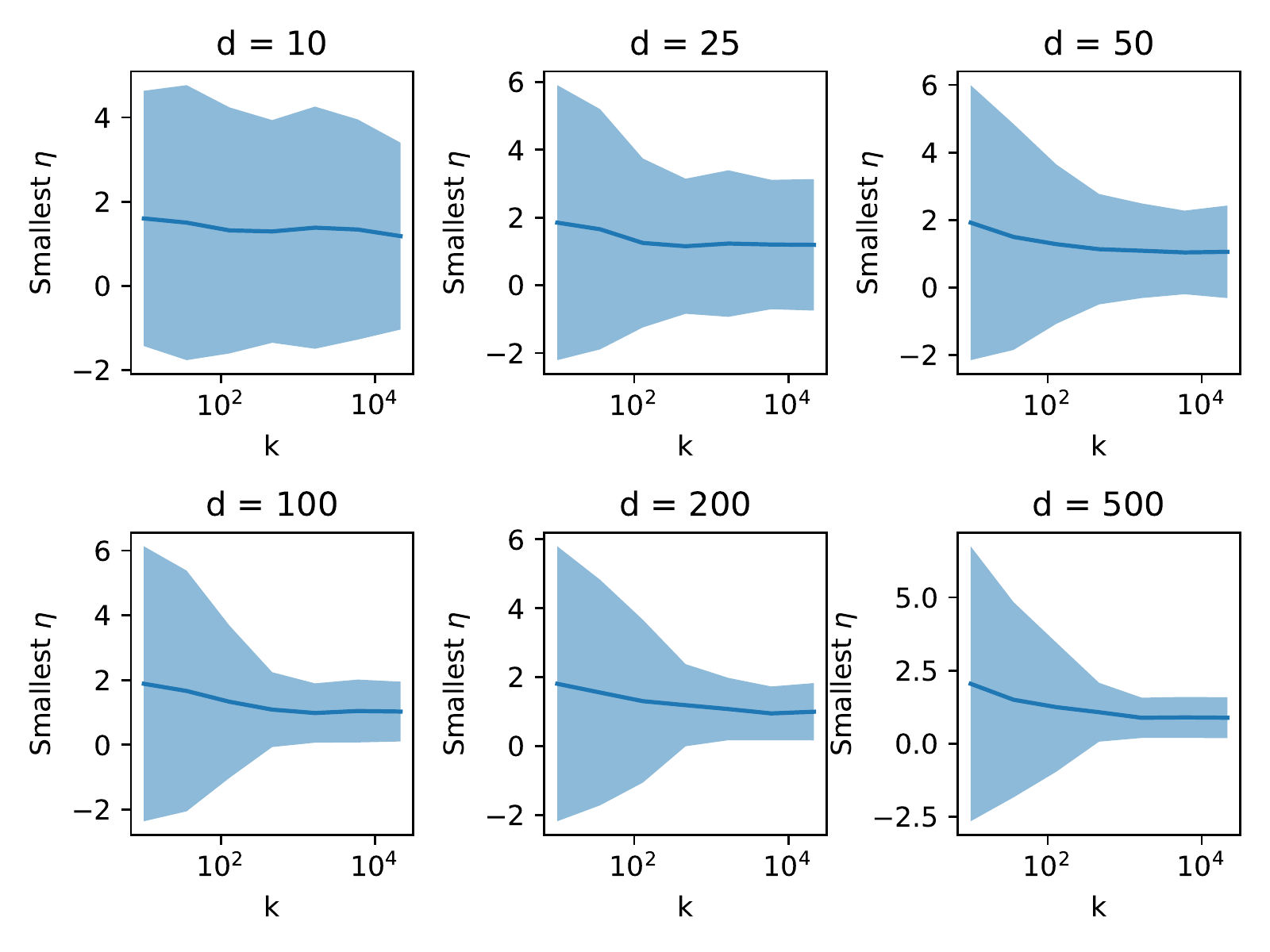}
    \caption{Smallest step size $\eta$}
    \label{fig:length4}
\end{subfigure}
\begin{subfigure}{.5\textwidth}
    \centering
    \includegraphics[width=0.9\linewidth]{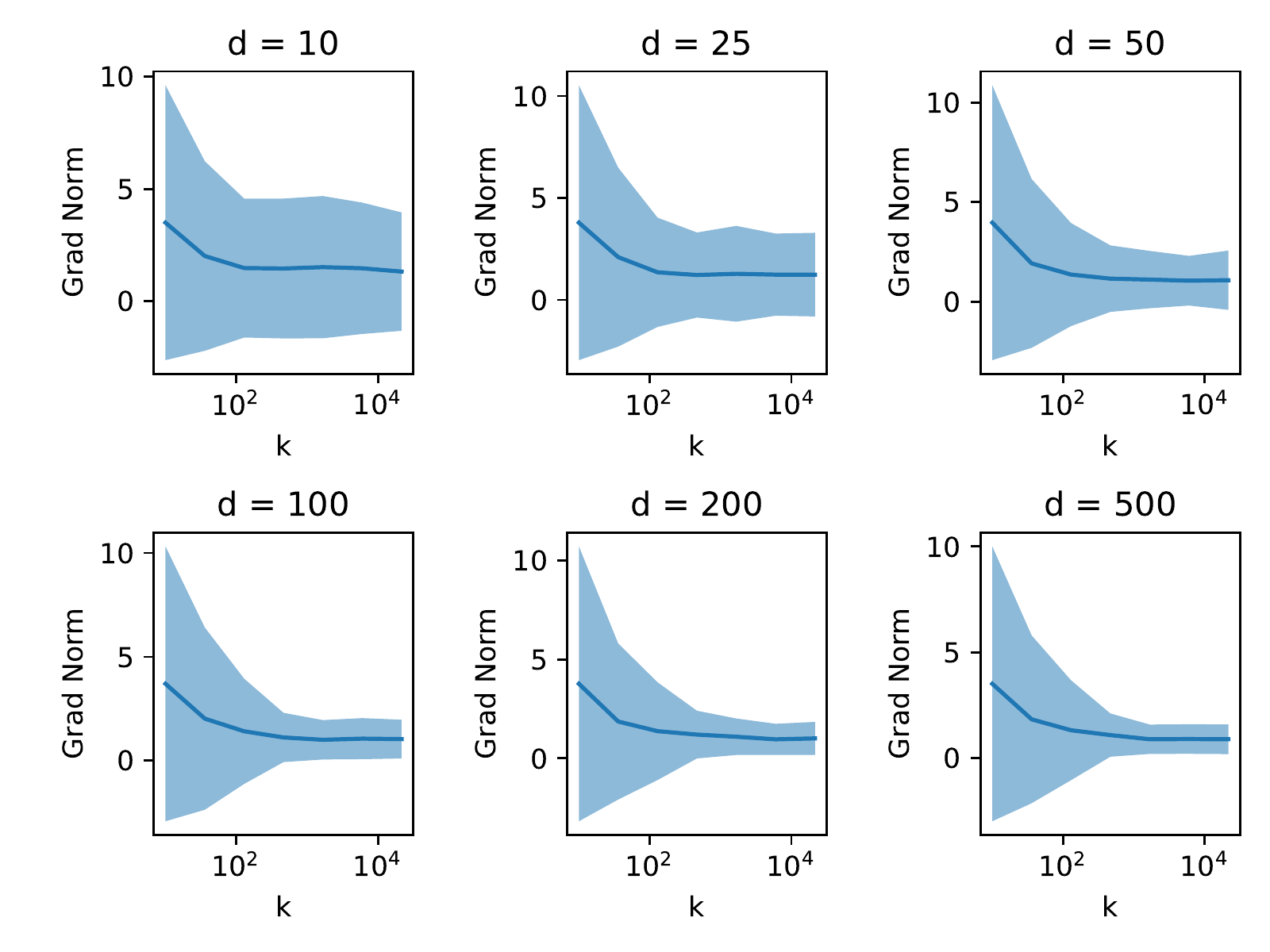}
    \caption{Norm of the gradient}
    \label{fig:grad4}
\end{subfigure}
\caption{Smallest $\eta$ switching the prediction and average gradient norm $\|\nabla f(x)\|$ for L=4.}
\end{figure}
\vspace{-1cm}
\begin{figure}[h]
\begin{subfigure}{.5\textwidth}
    \centering
    \includegraphics[width=0.9\linewidth]{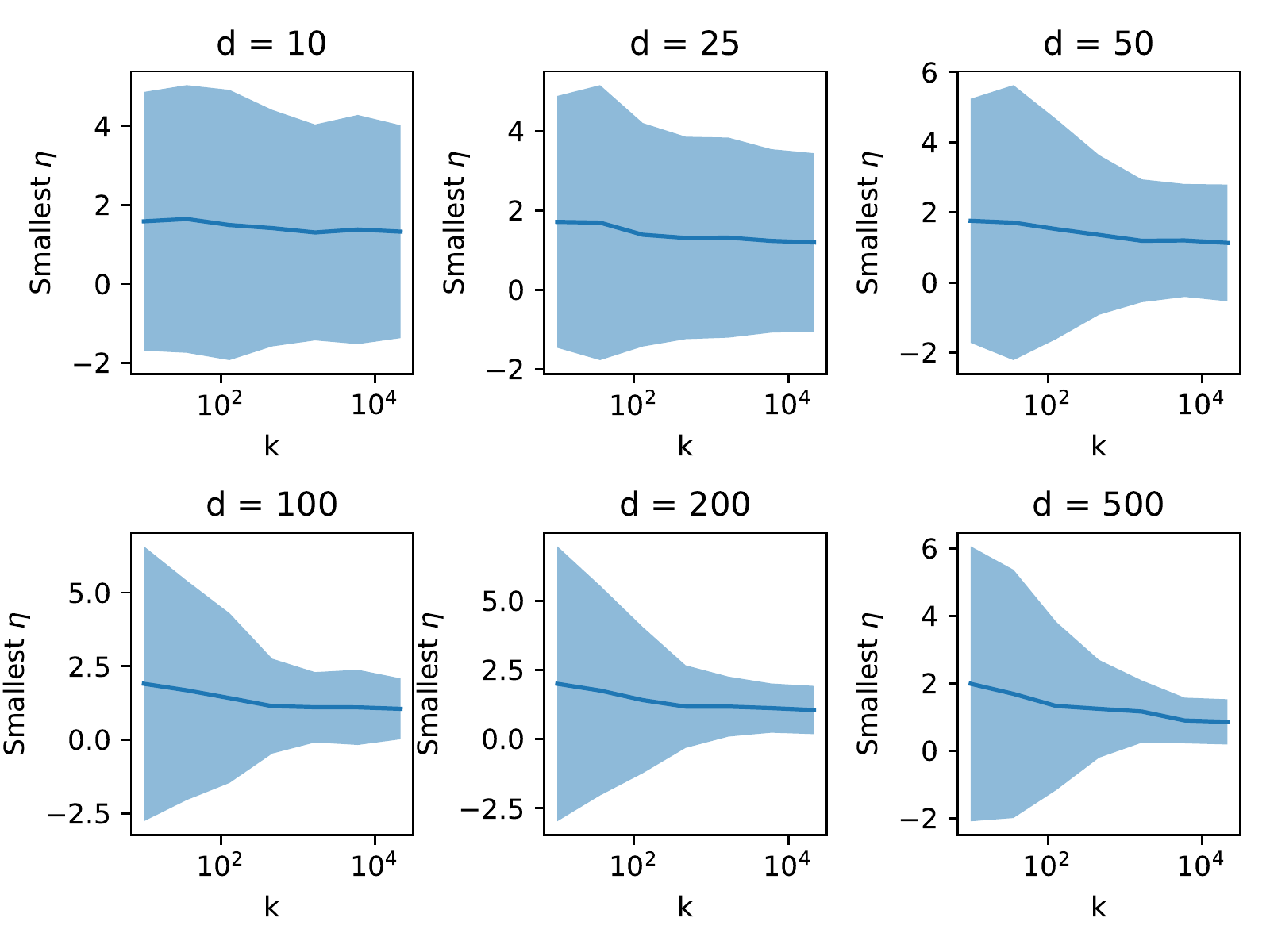}
    \caption{Smallest step size $\eta$}
    \label{fig:length5}
\end{subfigure}
\begin{subfigure}{.5\textwidth}
    \centering
    \includegraphics[width=0.9\linewidth]{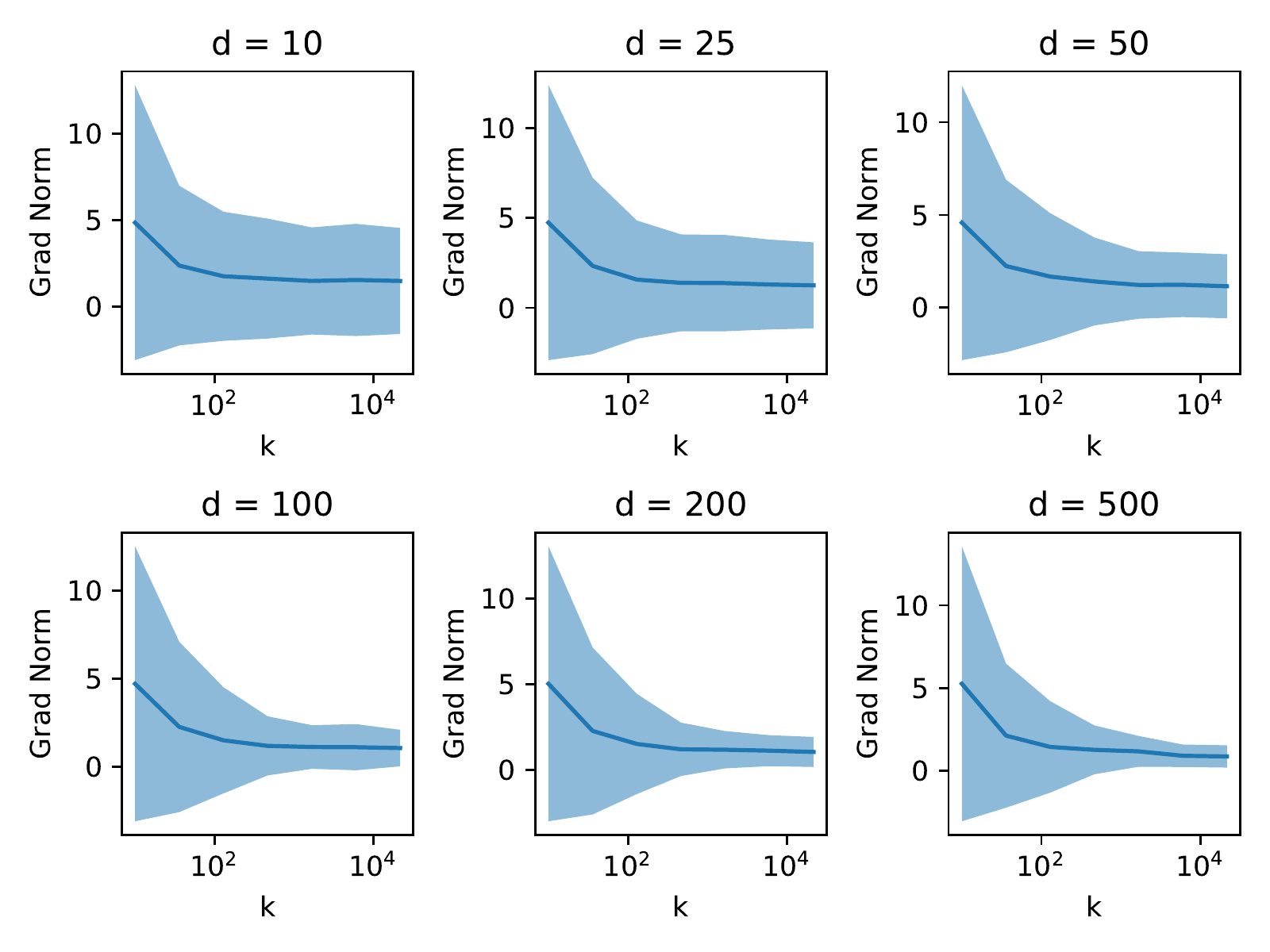}
    \caption{Norm of the gradient}
    \label{fig:grad5}
\end{subfigure}
\caption{Smallest $\eta$ switching the prediction and average gradient norm $\|\nabla f(x)\|$ for L = 5. }
\end{figure}
\vspace{-1cm}
\begin{figure}[h]
\begin{subfigure}{.5\textwidth}
    \centering
    \includegraphics[width=0.9\linewidth]{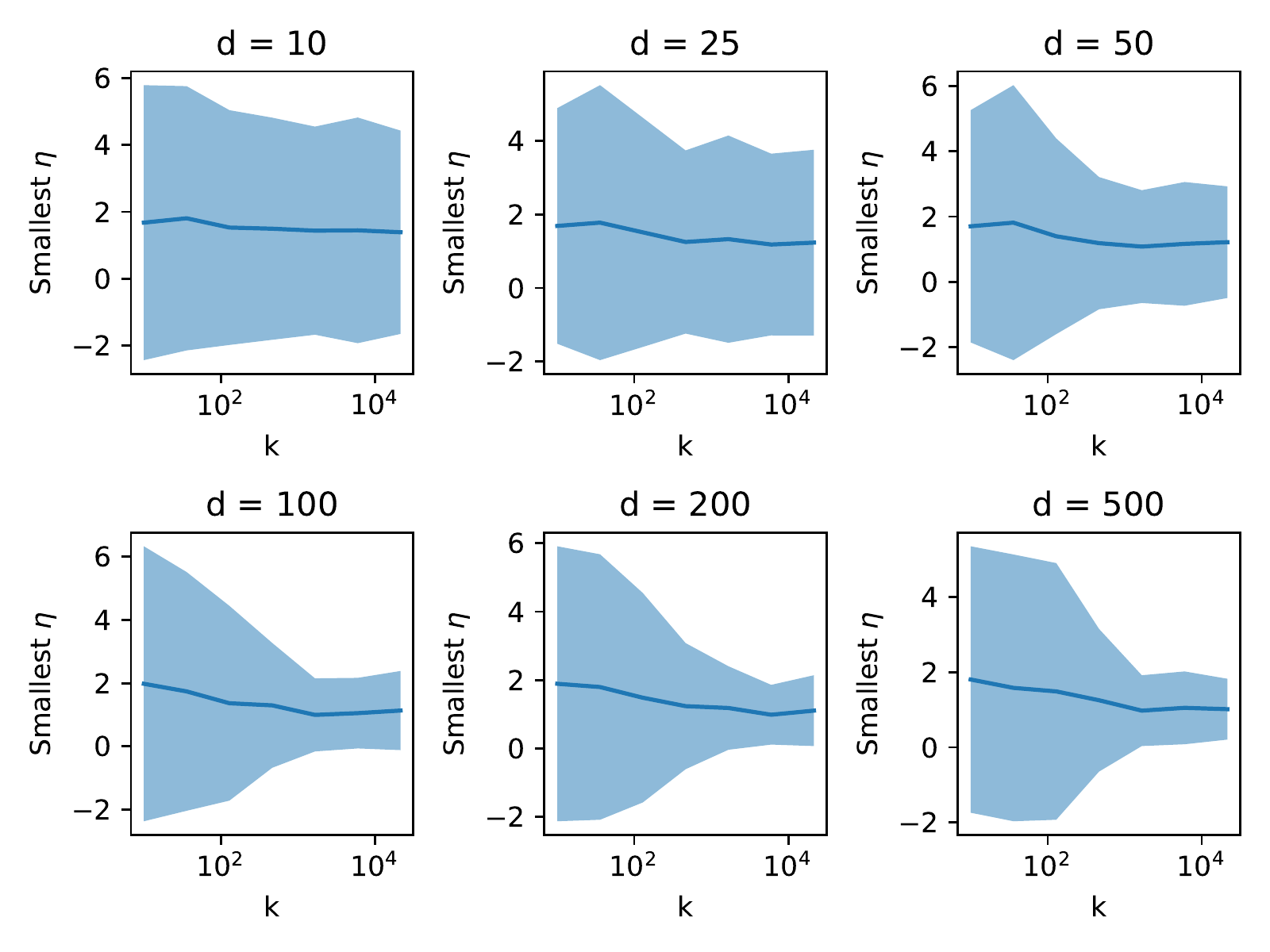}
    \caption{Smallest step size $\eta$}
    \label{fig:length6}
\end{subfigure}
\begin{subfigure}{.5\textwidth}
    \centering
    \includegraphics[width=0.9\linewidth]{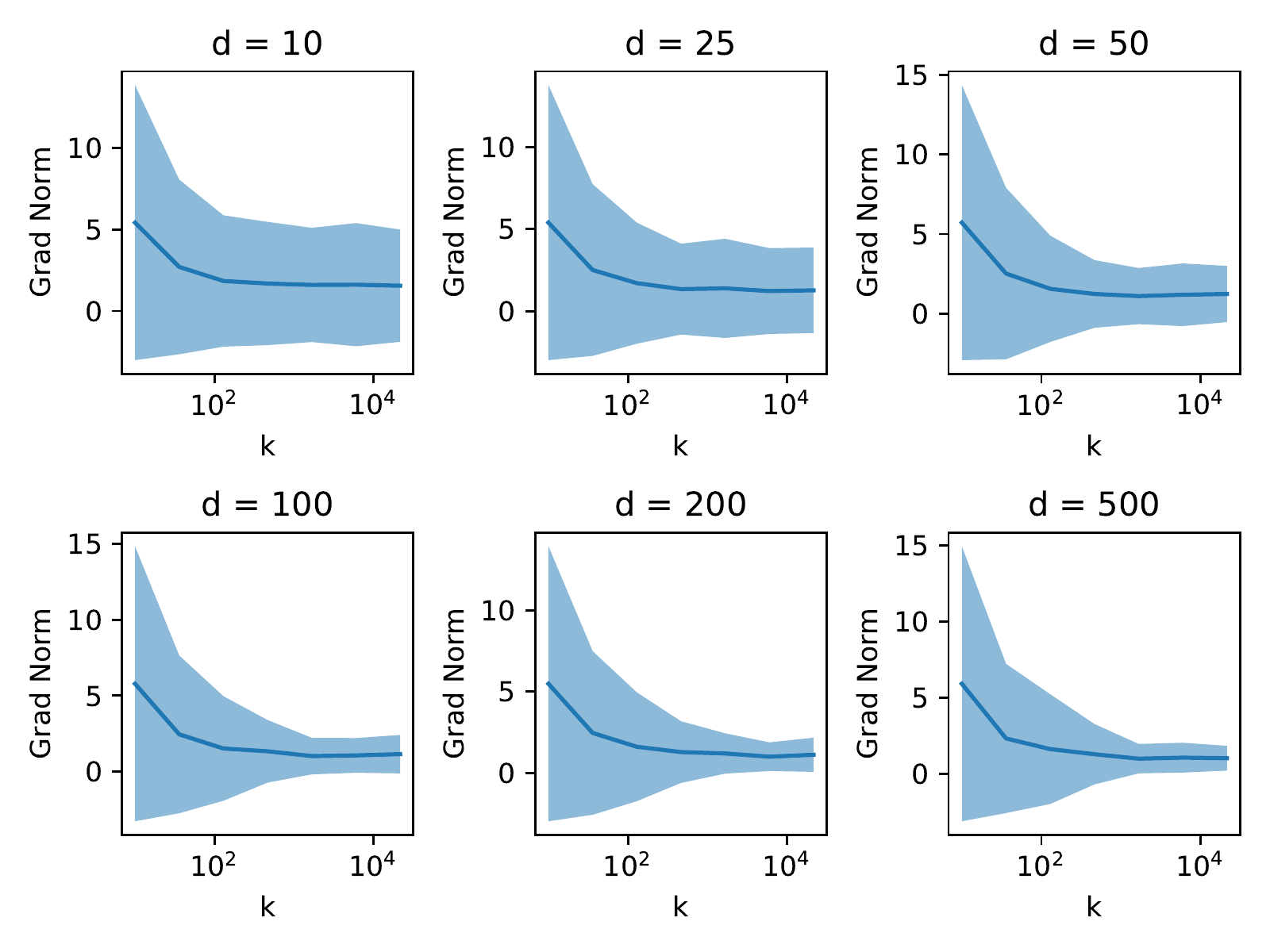}
    \caption{Norm of the gradient}
    \label{fig:grad}
\end{subfigure}
\caption{Smallest $\eta$ switching the prediction and average gradient norm $\|\nabla f(x)\|$ for L =6.}
\label{fig:app:L=6}
\end{figure}

\end{document}